\definecolor{mygray}{gray}{.9}
\newtheorem{theorem}{\textbf{Theorem}}
\newtheorem{definition}{Definition}
\newtheorem{lemma}{Lemma}
\newtheorem{corollary}{Corollary}
\newcommand\dif{\mathop{}\!\mathrm{d}}
\title{Subjectivity Learning Theory towards Artificial General Intelligence}
\author{\Large \textbf{Xin Su, Shangqi Guo, Feng Chen}\\ 
	\textsuperscript{\rm 1}Department of Automation, Tsinghua University\\ 
	suxin16, gsq15@mails.tsinghua.edu.cn; chenfeng@mail.tsinghua.edu.cn 
}
\begin{document}

\maketitle

\begin{abstract}
The construction of artificial general intelligence (AGI) was a long-term goal of AI research aiming to deal with the complex data in the real world and make reasonable judgments in various cases like a human. However, the current AI creations, referred to as ``Narrow AI", are limited to a specific problem. The constraints come from two basic assumptions of data, which are independent and identical distributed samples and single-valued mapping between inputs and outputs. We completely break these constraints and develop the subjectivity learning theory for general intelligence. We assign the mathematical meaning for the philosophical concept of subjectivity and build the data representation of general intelligence. Under the subjectivity representation, then the global risk is constructed as the new learning goal. We prove that subjectivity learning holds a lower risk bound than traditional machine learning. Moreover, we propose the principle of empirical global risk minimization (EGRM) as the subjectivity learning process in practice, establish the condition of consistency, and present triple variables for controlling the total risk bound. The subjectivity learning is a novel learning theory for unconstrained real data and provides a path to develop AGI.


\end{abstract}

\section{Introduction}

In the past few decades, artificial intelligence (AI) research has reached or even exceeded human-level performance on many specific problems \cite{silver2016mastering,mnih2015human,he2015delving}. In the current implementation of AI, the learning relies heavily on datasets, which are built by artificially distinguished and collected data samples of specific domains. The learning machine obtains abilities by minimizing the risk of specific problems, as shown in Figure \ref{pic 1 p2}. This method of AI introduces a basic data assumption \cite{Vapnik2003Statistical} that \emph{ data samples are independent and identically distributed (i.i.d)}. This assumption conforms to the characteristics of datasets and makes AI a solvable problem. However, it also limits the current AI machines to a specific ``intelligent" behaviors in a determined environment, which is referred to as ``Narrow AI"\cite{Kurzweil2005THE}. When facing a general learning case, these AI creations still have many problems, such as task specificity, weak generalization, catastrophic forgetting. Some recent works attempt to solve these problems. For instance, multi-task learning tries to learn multiple data distribution at the same time \cite{Evgeniou2004Regularized,Kendall2018Multi}; continual learning tries to learn sequential and unstable distributed data \cite{Zenke2017Continual,Aljundi2019Online,farquhar2019unifying}; transfer learning tries to extend the ability of one data distribution to another related one  \cite{Santoro2016One,Ren2018Meta}. Although the above works try to break the limits of i.i.d samples to achieve greater progress, they still cannot deal with the general learning scenario. The essential reason is another basic data assumption in traditional machine learning: \emph{The single-valued mapping function of inputs to outputs}. The current AI studies are learning the mapping function $y=f(x)$ or $F(y|x)$ from input to output for all data.

For the problem of general intelligence, not only the i.i.d assumption but the single-valued mapping assumption in traditional learning theory are all invalid. An input can be given various labels with different recognition methods. Every input-label pair constitutes a true data sample.  As shown in Figure \ref{pic 1 p1}, the same input contains the labels of ``Apple", ``Red", ``Sweet" and even more. It is a common and typical case. Notwithstanding, it shows the mentioned assumptions are not applicable to general learning problems.
These data samples do not come from an identical distribution since multiple labels violate the normalization of probability, and we even cannot describe the relationship between inputs and labels by a single-value mapping function because an input corresponds to multiple labels. Directly using traditional machine learning to general learning case results in a label confusion, as shown in Figure \ref{pic 1 p3}. In summary, these challenges of real data can be attributed to two essential features of artificial general intelligence (AGI): (1) \emph{Data Complexity.} AGI deals with inconsistent data from uncontrolled disparate tasks and various dynamic environments. (2) \emph{Judgment Complexity.} AGI involves global judgments over a variety of tasks and problems with different regularities\cite{AdamsMapping,Goe2014Artificial,Laird2010Cognitive}. To achieve AGI, we must first thoroughly break the traditional data assumptions and then construct a new representation framework for the real data. Therefore, we propose the subjectivity learning theory.

To construct the representation of real complex data, we introduce a new learning concept --\emph{subjectivity}.
We notice that human actively classifies related judgments of complex data into a specific category, where the data can be  locally represented as a function or distribution. 
From a philosophical perspective, some ideas, conclusions or judgments considered true only from the perspective of a subject \cite{Allen2002Power}. We assign the mathematical meaning to the concept of subjectivity, which is an active division and induction of complex data such that inputs hold a consistent judgment under a certain subject. The machine learns to divide the data into multiple subjects and build judgments for every subject. We refer this novel machine learning method to as subjectivity learning, which can model the complex data cases in general intelligence.
With the introduction of subjects, the machine of subjectivity learning needs to learn two representations: (1) Which subject each sample should be classified into. (2) How to express the data mapping under a certain subject. The main question of subjectivity learning is how to obtain these two descriptions.

To achieve the capabilities of general intelligence, we build a new learning goal -- \emph{the global risk}. We find that human's perception of the world is to avoid fatal errors in any situation, rather than just being accurate in a specific task. It means that the general intelligence adopts a risk metric covering all possible scenarios, which is referred to as \emph{global risk}. In subjectivity learning, this goal is to assess the sum of risks covering all subjects. In this paper, we prove that, for the complex data and the global risk metric, the description in traditional learning theories produces an inevitable error, while the subjectivity learning could mitigate or even eliminate it. 
Therefore, subjectivity learning is more appropriate for general intelligence, and the global risk can drive the representation of subjectivity learning.

In this paper, we propose \emph{subjectivity learning theory} towards general intelligence.
We first describe the framework of subjectivity learning and compare it to traditional learning theory. The principle of empirical global risk minimization is introduced to obtain the practical solution. Then, We further analyze the consistency and the error bound of the learning principle. Our contributions include: 

(1)In general intelligence understanding, we point out the crucial reasons of ``Narrow AI" and violations of data assumptions in traditional learning theory.

(2) In theory, we develop \emph{subjectivity learning} for solving these challenges by introducing the concept of subjectivity. We prove that subjectivity learning can drive a solution with lower risk than traditional machine learning. 

(3) In mathematical method, we extend the Law of Large Number to the case of two coupled variables and prove the consistency of empirical global risk minimization with the increase of samples when certain conditions are satisfied. We further analyzed the error bound and then present triple variables for controlling the error bound.

(4) In philosophy, we attempt to reveal the computational meaning of subjectivity in human intelligence, which explains why the subjectivity is necessary for AI to achieve general intelligence.

\begin{figure}[h]
	\centering
	\begin{minipage}{1\columnwidth}
	\vspace{-0.2cm}
		\subfigure[The Raw Data Samples in AGI Problem.]{ 
			\includegraphics[width = 8.5cm]{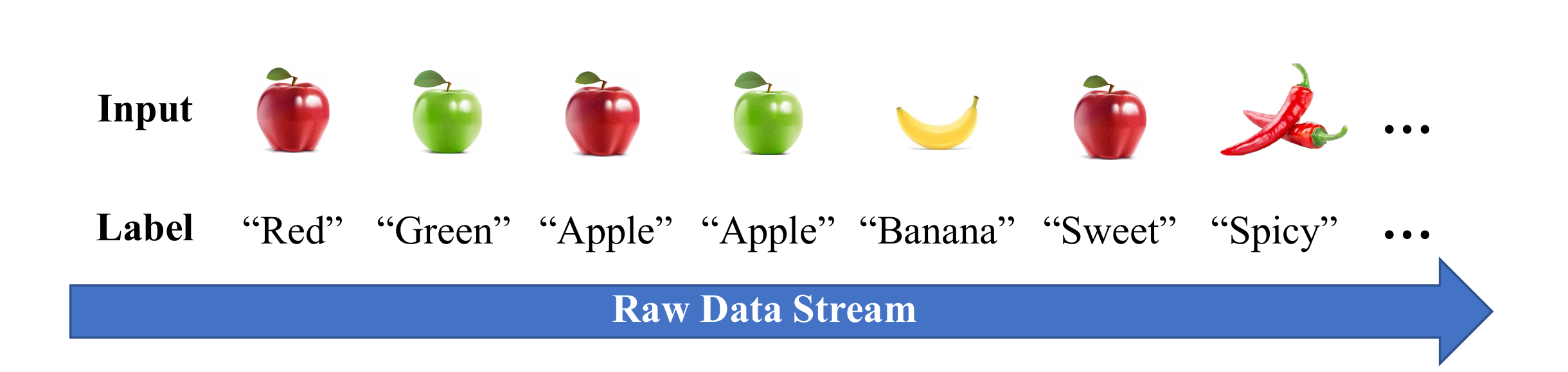}\label{pic 1 p1} 
		}
	\end{minipage} \\
	\vspace{-0.5cm}
	\hspace{-0.5cm}				
	\begin{minipage}{1\columnwidth}
		\vspace{0.4cm}
		\subfigure[Traditional Machine Learning.]{ 
			\includegraphics[width = 8.5cm]{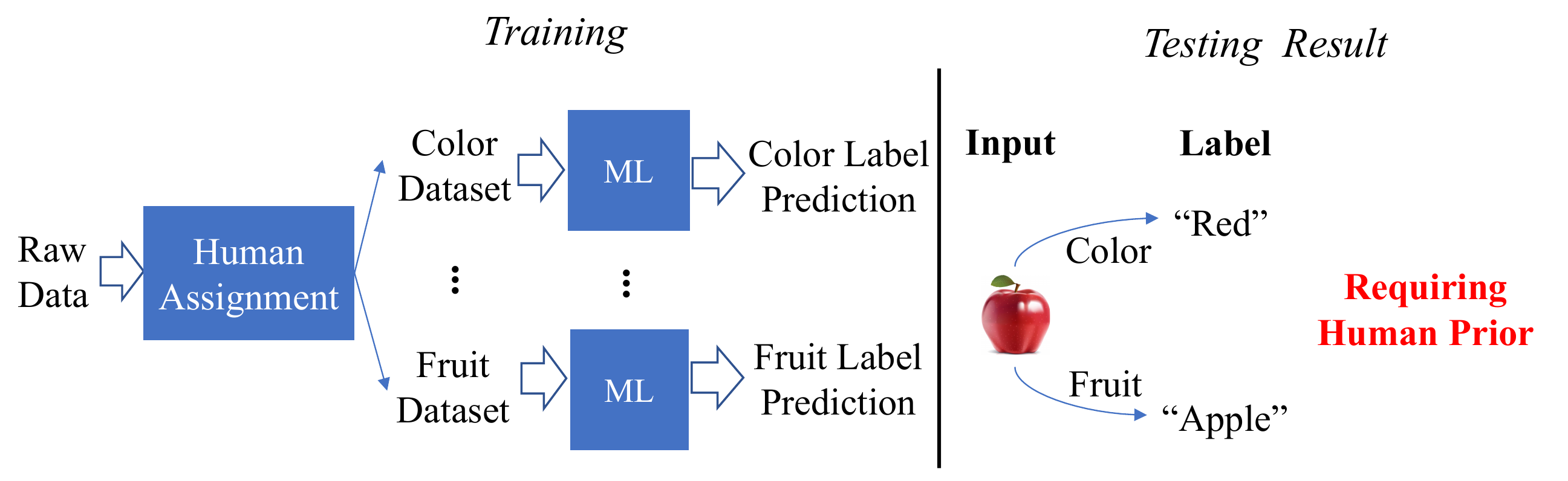}\vspace{2cm}\label{pic 1 p2} 
		}
	\end{minipage} \\
	\hspace{-0.5cm}
	\begin{minipage}{1\columnwidth}
	\vspace{-0.2cm}
		\subfigure[ML without Human-assignment for AGI Problem.]{ 
			\includegraphics[width = 8.5cm]{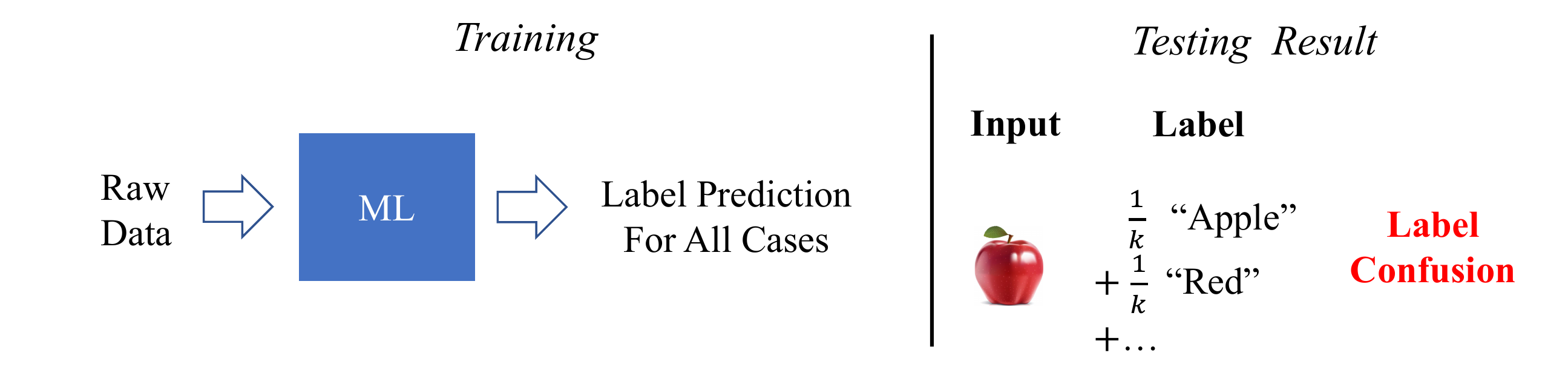}\label{pic 1 p3} 
		}
	\end{minipage} \\
	\hspace{-0.6cm}
	\begin{minipage}{1\columnwidth}
	\vspace{-0.2cm}
		\subfigure[Subjectivity Learning for Solving AGI Problem]{ 
			\includegraphics[width = 8.5cm]{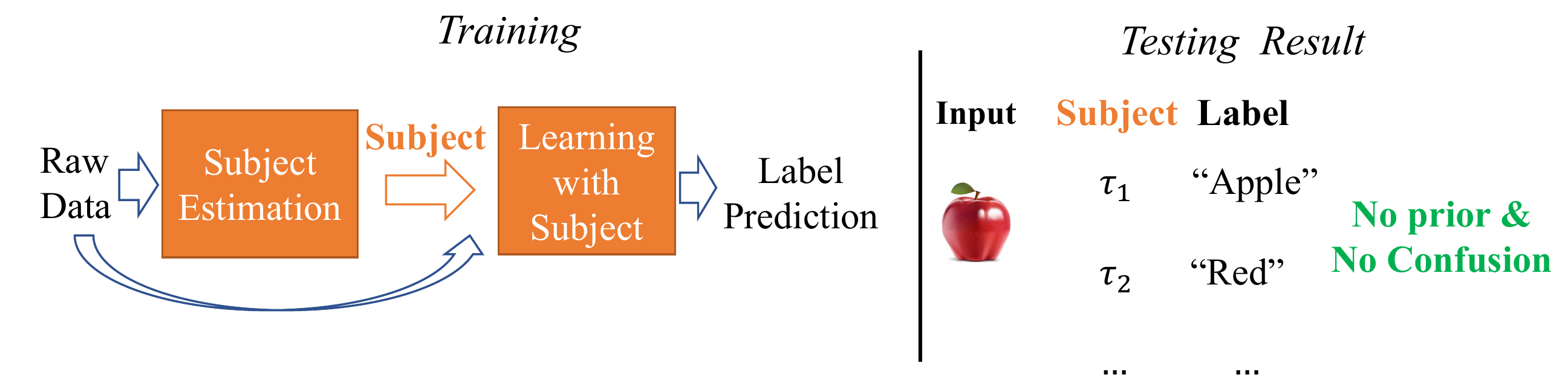}\label{pic 1 p4} 
		}
	\vspace{-0.1cm}
	\end{minipage}
	\vspace{-0.2cm}
	\caption{The data in AGI problem is complex, while every sample only presents partial information. The traditional machine learning focuses on the datasets of a specific problem which is assigned artificially. When they face the AGI problem, serious errors occur. The subjectivity learning actively divides the complex data into multiple subjects and learns the complete representation for AGI.}
	\vspace{-0.4cm}
\end{figure}

\section{Related Work}
Recently, some studies try to break down the limits of data assumptions in traditional learning theory to develop AGI. They can be divided into two categories. The first type of works make efforts to solve the non-i.i.d data challenges in one specific problem. \cite{steinwart2009fast,Yu1994Rates} use various stochastic processes to model the complex dataset. \cite{Balcan2014Efficient} studies classification tasks with unstable distributed data samples in the process of lifelong learning. \cite{pentina2015lifelong} proved that learning tasks with non-i.i.d samples are also beneficial for new tasks.
In these works, although the data samples are not i.i.d, the overall data is still assumed to be a certain distribution, and the input-output relation can be represented by a single-valued mapping function.

The other category focus on the problem of data with multiple distributions. The representative approach is to build a hierarchical architecture, which consists of a task encoder module and a task decoder module \cite{garnelo2018conditional,sung2017learning,schwarz2018progress}. The task encoder module explicitly projects the task-specific dataset to a task vector, and the tasks decoder module predicts targets based on both data inputs and task vectors. \cite{he2019task} pays attention to the task-agnostic continual learning problem, while they assume the data is piece-wise stationary and recent samples are i.i.d. \cite{garnelo2018conditional} addressed learning meta-networks from multiple tasks to perform few-shot learning in supervised learning such as regression, classification and image completion. \cite{sung2017learning} used the hierarchical framework to learn a meta-critic network to perform the few-shot transfer in the domain of reinforcement learning. These efforts attempt to deal with multi-distributed data, while they introduce other artificial assumptions for the raw data. Globally, the single-valued function mapping from the input (data \& task) to the output (label) is still preserved.

In summary, two basic data assumptions in traditional learning theory are not completely broken in all the existing works. We propose the subjectivity learning theory aiming to deal with the general data without these assumptions.

\section{The Framework of Subjectivity Learning}
In this section, we first explain the problem of general intelligence. Then, we define the framework of subjectivity learning clearly and construct the mathematical form of global risk. We further compare the subjectivity learning to traditional statistical learning, and prove that the global risk minimization in subjectivity learning results in a description with a lower risk.

\subsection{Problem Statement}
We consider the common learning scenario of general intelligence. The real data contains various complex cases, while every data sample comes from a specific case or specific evaluation criteria. The label in a sample can only reflect a part of the information in a specific task. Unlike the datasets, the sources and tasks of all data are unknown in AGI problem. The system requires learning from various samples and giving a complete and reasonable representation.

In the traditional learning theory, the data for a specific problem is collected as a dataset. All samples (input-label pair $(x,y)$) were assumed to be independent and identical distributed. The machine is looking for a function $y=f(x)$ (or $F(y|x)$) to express the relation of input $x$ to label $y$ by minimizing the risk functional. Remark the data samples as $z=(x,y)\in Z$ and the mapping function between $x$ and $y$ as $g((x,y)) = g(z)\in G$. When a probability distribution function $F(z)$ is defined on $Z$, the problem of the traditional risk minimization can be expressed as
\begin{align}
	\inf_{g} R_t(g(z))=\int L(z,g(z)) \dif F(z)
	\label{TRM}
\end{align} 
where $L(z,g(z))$ is the loss function of sample $z$. The statistical learning theory ensures that the empirical risk converges to expection with the increase of samples.

However, the samples $(x,y)$ in AGI problem are not i.i.d, also the value of mapping $y=f(x)$ (or $F(y|x)$) changes with various tasks and environments. Human's general intelligence involves how to adjust the judgment according to different environments. Therefore, the current learning theory is not applicable to general intelligence.

\subsection{Subjectivity Learning}

We notice that human's intelligence is based on subjectivity in making specific decisions, and one thing may correspond to different judgments under different subjects. Therefore, we draw on the concept of subjectivity to deal with complex data. In philosophical, that is the collection of the perceptions, experiences, expectations, and beliefs specific to a person. We define the mathematical meaning of subjectivity that
\begin{definition}
	The subjectivity is defined as the subjective collection for data samples with unified mapping, distribution, and loss metrics. 
\end{definition}

The core idea of subjectivity learning is learning to subjectively divide complex data samples into various subjects and to represent their various mappings. Although this method can deal with AGI data, it brings a new variable that is the subject attribution of the sample. 

Specifically, we remark the subject as $\tau$. The data description in subjectivity learning includes two parts:\\
 (1) \emph{What is the input-label mapping $y=f(x,\tau)$ (or $F(y|x,\tau)$) under a specific subject $\tau$}? It is similar to the function $y=f(x)$ in the traditional machine learning, but this relation can only be expressed as a function under a single subject.\\
 (2) \emph{Which subject $\tau$ should the samples $(x,y)$ belong to}? It's a new concept in the subjectivity learning. Mathematically, when we give the subjects attribution for data samples, the data and subjects form a joint distribution $F((x,y),\tau)$. Different from the traditional learning problem, this joint distribution is changing in the learning process. Thus, the sample attribution corresponds to the posterior probability for subjects, that is $p(\tau|(x,y))$, which is a function variable to be learned. \\
The current question is what is the goal driving the representation of subjectivity learning.
 

\subsection{Global Risk Functional}
The goal of human's intelligence is to avoid the fatal error in almost any cases, instead of only focusing on the risk of specific tasks. We also adopt this goal as the evaluation and construct the second core concept, \emph{the global risk functional}, for subjectivity learning. Since the data samples are divided by different subjects in subjectivity learning, the goal should consider the loss over all subjects.

Mathematically, when given the sample $z=(x,y)$, the subject of $z$ is remarked as $h(z,\tau)=p(\tau|z)/p(\tau)$. The input-label relation under a specific subject $\tau$ is defined as $g(z,\tau)=F(y|x,\tau)$. We define that
\begin{definition}
	The \textbf{global risk functional} in the subjectivity learning is defined as
	\begin{align}
	\inf_{g,h} R_s(g,h)=\int L_0(z,\tau,g(z,\tau))h(z,\tau) \dif F(z) \dif F(\tau),
	\label{GRM}
	\end{align}
	where $L_0$ is the loss function of sample $z$ under subject $\tau$, $F(z)$ and $F(\tau)$ are the distributions of sample $z$ and $\tau$.
\end{definition}

This global risk is related to the representation of subjectivity learning. For $g(z,\tau)$, it is obvious that the risk increases if the output does not match the real label. For the $h(z, \tau)=p(\tau|z)/p(\tau)$, it also causes a serious error if conflict samples in different nature are classified as the same subject. The following sample can clearly illustrate this situation.

Let the data (input-label pairs) be $(x,y)$. For traditional statistical learning problem, let the output of the learning machine from input $x$ as $f(x)$. Then the loss function is formed as $L(f(x),y)$, and the risk function should be expressed as
\begin{align*}
R_t(f) = \int L(f(x),y)dF(x,y)
\end{align*}
Note that an sample $x$ may correspond to multiple $y_j$ in general case. If we optimize the risk functional 
\begin{align*}
R_t(f) = \sum_{i=1}^{l} L(f(x_i),y_i)
\end{align*}
over all data, the optimal solution is that
\begin{align*}
f(x_i)= \bar{y}_{i} = \frac{1}{m} \sum_{j=1}^{m} y_{ij}.
\end{align*}
And there exist an absolute confusion error in the optimal loss, that is
\begin{align*}
\Delta R = \sum_{i,j} L( \bar{y}_{i}, y_{i,j}).
\end{align*}
It is the label confusion shown in Figure1(c) of the main context.
In the subjectivity learning, the samples are divided into different subjects $\tau$ and measured by global risk $R_s(g,h)$. If the subjects' division $h(z,\tau)$ is unreasonable, such as all samples are still classified into one subject, the above absolute error is reflected in the global risk. 
On the other hand, when samples are reasonably divided into different subjects, such as no conflict of sample mapping in any subject, the global risk
\begin{align*}
R_s=\sum_{j} \sum_{i} L(f(x_i,\tau_j,\alpha),x_i)I\left [ (x_i,y_i)\in \tau_j \right ]
\end{align*}
likely converges to zero. It also shows that subjectivity is pretty valuable to general intelligence instead of only human's prejudice against things.

Subjectivity representations and the global risk functional combined role makes the general intelligence a learning problem. Next, we compare the subjectivity learning with traditional machine learning, and prove that subjectivity learning results in a lower global risk.


\subsection{Risk Gap}
Here we compare the global risk minimization of subjectivity learning to traditional learning problem. The following theorem demonstrates that the subjectivity learning results in a lower optimal risk under the equivalent loss measure.
\begin{theorem}
	Let us consider the problem of machine learning (\ref{TRM}) and the problem of subjectivity learning (\ref{GRM}). Under the equivalent loss measure $L(z,g(z))|\tau = L_0(z,\tau,g(z))$, the inequality
	\begin{align}
	     \inf_{g_1} R_{t}(g_1(z)) \geqslant \inf_{g_2,h} R_{s} (g_2,h)
	     \label{th1 p1}
	\end{align}
	take place. 
	
	For the optimal solution $g^*(z,\tau)$ and $h^*(z,\tau)$, if there exists samples measured with $\dif F(z,\tau_1),\dif F(z,\tau_2)>0$ such that $g^*(z,\tau_1) \neq g^*(z,\tau_2)$, there exist an absolute risk gap that
	\begin{align}
	\Delta R^* = \inf_{g} R_t(g(z)) - \inf_{g,h} R_s(g,h) >0
	\label{th1 p2}
	\end{align}
\end{theorem}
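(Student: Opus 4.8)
The plan is to prove (\ref{th1 p1}) by exhibiting the traditional risk-minimization problem as a \emph{restriction} of the subjectivity problem, and then to obtain (\ref{th1 p2}) by showing the risk gap equals the strict gain from letting the predictor depend on the subject while the attribution $h$ is frozen at its optimal value $h^*$. For (\ref{th1 p1}) I would fix any admissible $g_1$ for (\ref{TRM}) and form a feasible pair $(g_2,h)$ for (\ref{GRM}) with $g_2(z,\tau):=g_1(z)$ and any admissible attribution $h(z,\tau)=p(\tau\mid z)/p(\tau)$ (for instance the one-subject case $h\equiv 1$). Because such an $h$ satisfies $\int h(z,\tau)\dif F(\tau)=\int p(\tau\mid z)\dif\tau=1$ for every $z$, and the equivalent-loss hypothesis gives $L_0(z,\tau,g_1(z))=L(z,g_1(z))$ for all $\tau$, the $\tau$-integral in (\ref{GRM}) collapses and $R_s(g_2,h)=\int L(z,g_1(z))\dif F(z)=R_t(g_1)$. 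Taking infima, and noting that these pairs form a subset of the subjectivity feasible set, gives $\inf_{g_2,h}R_s(g_2,h)\le\inf_{g_1}R_t(g_1)$.

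For (\ref{th1 p2}) I would observe that the same computation yields $R_s(g_1,h)=R_t(g_1)$ for every $\tau$-independent $g_1$ and every admissible $h$; in particular, after pointwise minimization over the value $g_1(z)$,
\[
\inf_{g_1}R_t(g_1)=\inf_{g_1}R_s(g_1,h^*)=\int\big(\inf_{a}\int L_0(z,\tau,a)\,h^*(z,\tau)\dif F(\tau)\big)\dif F(z).
\]
On the other hand, optimality of $(g^*,h^*)$ forces $g^*(z,\tau)$ to minimize $L_0(z,\tau,\cdot)$ on the support of $h^*$, so $R_s(g^*,h^*)=\int\big(\int\min_{b}L_0(z,\tau,b)\,h^*(z,\tau)\dif F(\tau)\big)\dif F(z)$. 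Subtracting, and using $\inf_{g_2,h}R_s=R_s(g^*,h^*)$,
\[
\Delta R^*=\int\phi(z)\,\dif F(z),\qquad \phi(z):=\inf_{a}\int L_0(z,\tau,a)h^*(z,\tau)\dif F(\tau)-\int\min_{b}L_0(z,\tau,b)\,h^*(z,\tau)\dif F(\tau),
\]
and $\phi(z)\ge 0$ since $L_0(z,\tau,a)\ge\min_{b}L_0(z,\tau,b)$ pointwise in $\tau$ for every $a$.

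It then remains to produce a set of positive $\dif F(z)$-measure on which $\phi(z)>0$. On the set postulated in the theorem, $h^*(z,\tau_1),h^*(z,\tau_2)>0$ while the pointwise optima satisfy $g^*(z,\tau_1)\ne g^*(z,\tau_2)$; any single compromise value $a$ can coincide with at most one of them, so — using that $L_0(z,\tau,\cdot)$ has a unique minimizer and strictly exceeds its minimum elsewhere — $a$ is strictly suboptimal for at least one of $\tau_1,\tau_2$, which makes $\phi(z)>0$ there. Integrating over this positive-measure set yields $\Delta R^*>0$.

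The main obstacle is exactly this last implication: I must exclude degenerate losses for which two distinct pointwise minimizers can nevertheless be approximated arbitrarily well, on average over $\tau$ under $h^*(z,\cdot)$, by a single value $a$ (in which case $\phi(z)=0$ despite $g^*(z,\tau_1)\ne g^*(z,\tau_2)$), and I must justify the interchange of infimum and integral, the measurability of the selections $z\mapsto a(z)$ and $(z,\tau)\mapsto g^*(z,\tau)$, and the attainment of the infima used in the displayed identities. All of these follow from standard regularity on $L_0$ — lower semicontinuity and coercivity in the prediction argument together with a strict (hence unique) minimum — which I would record as a running assumption before the proof.
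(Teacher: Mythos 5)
Your proposal is correct and follows essentially the same two-step route as the paper: part one by embedding the $\tau$-independent predictors $g_2(z,\tau)=g_1(z)$ into the subjectivity class and using $\int h(z,\tau)\dif F(\tau)=1$ to collapse the $\tau$-integral (the paper does this by conditioning $R_t$ on $\tau$ under a fixed division $\hat h$ and observing $G_1'\subseteq G_2$), and part two by comparing both risks under the frozen optimal attribution $h^*$ and reducing to a pointwise comparison in $z$. The one place you genuinely diverge is how strictness is secured: you minimize pointwise in the prediction argument and invoke a unique strict minimizer of $L_0(z,\tau,\cdot)$, so that a single compromise value must be strictly suboptimal for at least one of $\tau_1,\tau_2$; the paper instead inserts, mid-proof, the assumption that $L_0(z,\tau,\cdot)$ is injective (``the loss function holds $L_0(z,\tau,g_1)\neq L_0(z,\tau,g_2)$ when $g_1\neq g_2$'') and combines it with the inequality $L_0(z,\tau,g^*(z,\tau))\le L_0(z,\tau,g_1^*(z))$, obtained by a swap argument, to upgrade $\le$ to $<$. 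Your explicit identification of the degenerate-loss obstruction is exactly the gap in the theorem as stated: without some such hypothesis on $L_0$ (yours or the paper's), $\Delta R^*>0$ fails, e.g.\ for a loss constant in its prediction argument; the paper's proof needs the same repair but supplies it only with the word ``Generally''. Your flagged measurability/attainment caveats are likewise glossed over in the paper, which simply assumes the infima are attained at $g_1^*$ and $g^*$.
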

\begin{proof}
	See the Supplementary Material.
\end{proof}

The above theorem contains two parts. In the first part  (\ref{th1 p1}), we qualitatively state that the global risk functional of subjectivity learning drives a more accurate description. In AGI problem, a sample $z$ almost holds multiple different judgment $g(z,\tau)$ with various $\tau$. In the second part (\ref{th1 p2}), we prove that there must be a risk gap greater than 0 between the traditional machine learning and the subjectivity learning. Statistical learning theory ensures that minimum empirical risk approaches the lower bound of the expected risk with the increase of data samples. However, the above risk gap demonstrates that the lower bound of traditional risk functional can never approach the optimal description of AGI. On the other hand, the subjectivity learning with the global risk is able to drive a better description. From a philosophical perspective, this theorem also explains why the human constructs various subjectivity for the real complex data, which is a method of interpreting the world with less risk.

 After presenting the framework of subjectivity learning, we then illustrate the learning process and its convergence in next section.


\section{The Theory of Subjectivity Learning}
After giving the representation of subjectivity learning and the form of global risk functional, the current main question is how to find the minimization of the global risk functional. In this section, we introduce the principle of \emph{empirical global risk minimization (EGRM)}, and explain the process of subjectivity learning under this principle. Different from statistical learning theory, the subjectivity learning process involves two types of samples, which are data samples $z$ and subjects samples $\tau$, and the numbers of them are related. The nature of convergence changes with this relationship. Therefore, we generalize the Law of Large Number to the case of coupled variables. Then, we establish the consistency conditions and give the convergence probability of the learning process. We prove that the empirical global risk minimization can tend to the expectation with the increase of data samples and subjects samples.

\subsection{Principle of EGRM}

We cannot directly minimize the functional (\ref{GRM}) since the probability distributions of $F(z)$ in the definition is unknown. Also, we need to consider the newly introduced variable, subject $\tau$. In practical, we first get the data samples $z_1,...,z_l$ in various cases.  The form of subjects $\tau_1,...,\tau_m$ is a set of samples from a prior distribution $F(\tau)$, which can be a language, a symbol or something else. Different with data samples, subject samples is artificially introduced and controllable. Before constructing the joint distribution with data samples, the subject sample do not have a specific physical meaning, so it can be sampled from any certain prior distribution. However, the number of subject samples is related to the giving data samples, and it should change as new samples continually arrive. When the data samples and subject samples are determined, we need to select the optimal functions $g$ and $h$ from the sets $G$ and $H$ to minimize the global risk. Therefore, the learning process of subjectivity learning can be summarized as first sampling a set of subjects $\tau_1,...,\tau_m$ on the basis of data samples $z_1,...,z_l$, and then selecting the optimal description functions $g$ and $h$ based on these samples to make the global risk functional minimum. This novel principle is called \emph{empirical global risk minimization (EGRM)}. Now we show the specific form in mathematical.

For simplicity, we first rewrite the global risk functional. When formulating the minimization of the functional (\ref{GRM}), the set of functions $g(z,\tau)$ and $h(z,\tau)$ will be given in a parametric form that $\{ g(z,\tau;\alpha_g) \alpha_g \in \Lambda_g \}$ and $\{ h(z,\tau;\alpha_h) \alpha_h \in \Lambda_h \}$. Here $\alpha_g$ and $\alpha_h$ are parameters from the set $\Lambda_g$ and $\Lambda_h$ such that the value $\alpha_g=\alpha_g^*$ defines the specific function $g(z,\tau;\alpha_g^*)$ in the set $g(z,\tau)$ and similar as $\alpha_h$. We further merge these two sets of parameters such that $\alpha = (\alpha_g,\alpha_h)$ and $\alpha \in \Lambda = \Lambda_g \times \Lambda_h$. In this notation, the functional (\ref{GRM}) can be written as
\begin{align}
	\inf_{\alpha \in \Lambda} R(\alpha)=\iint Q(z,\tau,\alpha) \dif F(z) \dif F(\tau)
	\label{GRM2}
\end{align}
where
\begin{align}
	Q(z,\tau,\alpha) = L_0 (z,\tau, g(z,\tau;\alpha_g))h(z, \tau;\alpha_h).
\end{align}
The function $Q(z,\tau,\alpha)$, which depends on variables $z$, $\tau$ and $\alpha$, is called basic loss function. Each function $Q(z,\tau,\alpha^*),\alpha^*\in \Lambda$ determines the value of the loss resulting from the data vector $z$ and subject vector $\tau$.

Then we introduce the principle of the empirical global risk minimization that
\begin{definition}
        \textbf{(Principle of Empirical Global Risk Minimization, EGRM)} On the basis of data samples $z_1,...,z_l$, we select a suitable number of subject samples $\tau_1,...,\tau_m$ and minimize the functional
	\begin{align}
		R_{emp}(\alpha, m, l)=\frac{1}{m} \sum_{j=1}^{m} \frac{1}{l} \sum_{i=1}^{l} Q(z_i,\tau_j,\alpha),\ \alpha \in \Lambda,
		\label{EGRMeq}
	\end{align}
	which is called empirical global risk functional.
\end{definition}

This functional is defined in an explicit form, and can be minimized. Let the minimum of the global risk functional be attained at $Q(z,\tau,\alpha_0)$ and let the minimum of the empirical global risk be attained at $Q(z,\tau,\alpha_{l,m})$. We take the principle of EGRM that is using the function $Q(z,\tau,\alpha_{l,m})$ as an approximation of the function $Q(z,\tau,\alpha_0)$.  The next problem is to establish the conditions under which the function $Q(z,\tau,\alpha_{l,m})$ is close to the function $Q(z,\tau,\alpha_0)$. Also, we want to know (1) how to control the number of subject number with data samples; (2) how the speed of $Q(z,\tau,\alpha_{l,m})$ is close to $Q(z,\tau,\alpha_0)$ as the data and subjects samples increases. These questions are discussed in the following context.

\subsection{Convergence with Two Coupled Variables}

We have proposed the empirical global risk to approximate the expectation (\ref{GRM2}). We need to determine under which conditions such an approximation is valid. In this subsection, we first give the definition of consistency. The consistency in statistical learning theory is based on the Law of Large Number that is the experience converges to expectation as the number of samples increasing. While in the subjectivity learning, besides the data sample, subjects sample are newly introduced. The convergence should consider the increase of these two type of variables and their relationship. Therefore, we generalize the Law of Large Number to the case of two coupled variable and define the consistency for the principle of EGRM.

Let us consider a related empirical process. Let the probability distribution function $F(\tau)$ and $F(z)$  be defined on the space $\tau \in \mathbb{R}^{n_\tau}$ and $z \in \mathbb{R}^{n_z}$, and let $Q(z,\tau,\alpha), \alpha \in \Lambda$ be a set of measurable loss functions. Let $\tau_1,...,\tau_m,...$ and $z_1,...,z_{l},...$ be sequences of independent identically distributed vectors of subjects and data. Consider the one-sided empirical process given by the sequence of random values 

\begin{align}
\begin{split}
\xi ^{\left < m,l \right >} = \sup _{\alpha \in \Lambda}  \left(  R(\alpha)-R_{emp} (\alpha, m, l) \right)  \\
\left < m,l \right >=&1,2,...
\end{split} \
\label{process1}
\end{align}
which $R(\alpha)$ is the form (\ref{GRM2}) and $R_{emp}$ is the form (\ref{EGRMeq}).

In this process, we need to consider the increase of samples number $m$ and $l$ simultaneously, since different number order of two samples changes the characteristics of the process and its convergence. We note it as $\left <m,l \right >$, which means the growth of these two variables is based on a certain rule, and $\left < m,l \right > \rightarrow \infty $ notes the variables $m,l$ both tend to infinity under this rule. In the subjectivity learning, this rule can  be adjusted with controllable subjects number. 

The Law of Large Numbers demonstrates that the sequence of means converges to expectation of a random variable (if it exists) as the number of samples increases. As the starting point for consistency theory, we first generalize the Law of Large Number to the case of two couple variables.  Now we introduce the theorem: 
\begin{theorem}
	 (\textbf{Convergence Theorem with two coupled variables.}) When the $\alpha^*$ is determined, for the function $Q(z,\tau,\alpha^*)$ and any $\varepsilon > 0$, the following convergence
	\begin{align}
	P\Big \{  R(\alpha^*)-R_{emp} (\alpha^*, m, l) >\varepsilon  \Big \} \xrightarrow[\left < m,l \right >\rightarrow \infty]{} 0
	\label{eq lemma 1}
	\end{align}
	take place, where the sample numbers $\left <m,l \right >$ satisfied the rule:
	\begin{align}
		l > \frac{2(B_z - A_z)^2}{\varepsilon ^2} \ln m + \frac{(B_z - A_z)^2}{(B_\tau - A_\tau)^2} m.
		\label{sample relation}
	\end{align}
	The $B_z,A_z$ and $B_\tau, A_\tau$ are respectively the bound of functions that $A_z \leq Q(z,\tau) \leq B_z$ and $A_\tau \leq R^{lo}(\alpha, \tau) \leq B_\tau$.
	\label{lemma 1}
\end{theorem}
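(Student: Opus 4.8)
The plan is to peel the two coupled averages apart with a telescoping decomposition and then invoke Hoeffding's inequality twice, joined by a union bound over the subject samples. Write $R^{lo}(\alpha^*,\tau)=\int Q(z,\tau,\alpha^*)\dif F(z)$ for the inner (conditional) risk at a fixed subject, so that $R(\alpha^*)=\int R^{lo}(\alpha^*,\tau)\dif F(\tau)$ by (\ref{GRM2}). Then
\[
R(\alpha^*)-R_{emp}(\alpha^*,m,l)=\underbrace{\Big(R(\alpha^*)-\tfrac1m\textstyle\sum_{j=1}^{m}R^{lo}(\alpha^*,\tau_j)\Big)}_{(\mathrm I)}+\underbrace{\tfrac1m\textstyle\sum_{j=1}^{m}\Big(R^{lo}(\alpha^*,\tau_j)-\tfrac1l\textstyle\sum_{i=1}^{l}Q(z_i,\tau_j,\alpha^*)\Big)}_{(\mathrm{II})}.
\]
Term $(\mathrm I)$ depends only on the i.i.d.\ subjects $\tau_1,\dots,\tau_m$; term $(\mathrm{II})$ carries the coupling, since the same data $z_1,\dots,z_l$ feed every inner average. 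Writing $\varepsilon=\varepsilon_1+\varepsilon_2$ (ultimately $\varepsilon_1=\varepsilon_2=\varepsilon/2$), it suffices to bound $P\{(\mathrm I)>\varepsilon_2\}$ and $P\{(\mathrm{II})>\varepsilon_1\}$ separately.

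For $(\mathrm I)$: the values $R^{lo}(\alpha^*,\tau_j)$ are i.i.d., lie in $[A_\tau,B_\tau]$ by hypothesis, and have mean $R(\alpha^*)$, so Hoeffding's inequality gives $P\{(\mathrm I)>\varepsilon_2\}\le\exp\!\big(-2m\varepsilon_2^2/(B_\tau-A_\tau)^2\big)$. For $(\mathrm{II})$: condition on $\tau_j$; then $Q(z_1,\tau_j,\alpha^*),\dots,Q(z_l,\tau_j,\alpha^*)$ are i.i.d.\ in $[A_z,B_z]$ with mean $R^{lo}(\alpha^*,\tau_j)$, so Hoeffding gives $P\{R^{lo}(\alpha^*,\tau_j)-\tfrac1l\sum_iQ(z_i,\tau_j,\alpha^*)>\varepsilon_1\mid\tau_j\}\le\exp\!\big(-2l\varepsilon_1^2/(B_z-A_z)^2\big)$; taking expectations over $\tau_j$ and then a union bound over $j=1,\dots,m$ (no cross-$j$ independence is needed, only that the average of the $m$ gaps exceeding $\varepsilon_1$ forces one of them to exceed $\varepsilon_1$) yields $P\{(\mathrm{II})>\varepsilon_1\}\le m\exp\!\big(-2l\varepsilon_1^2/(B_z-A_z)^2\big)$. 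The shared-data coupling enters only through this factor $m$, which is exactly why $\ln m$ appears in the sample relation.

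Finally, adding the two estimates with $\varepsilon_1=\varepsilon_2=\varepsilon/2$ gives
\[
P\{R(\alpha^*)-R_{emp}(\alpha^*,m,l)>\varepsilon\}\le\exp\!\Big(-\tfrac{m\varepsilon^2}{2(B_\tau-A_\tau)^2}\Big)+\exp\!\Big(\ln m-\tfrac{l\varepsilon^2}{2(B_z-A_z)^2}\Big).
\]
Substituting the prescribed bound $l>\tfrac{2(B_z-A_z)^2}{\varepsilon^2}\ln m+\tfrac{(B_z-A_z)^2}{(B_\tau-A_\tau)^2}m$ drives the exponent of the second term below $-\tfrac{m\varepsilon^2}{2(B_\tau-A_\tau)^2}$, so the whole bound is $\le 2\exp\!\big(-m\varepsilon^2/(2(B_\tau-A_\tau)^2)\big)\to0$ as $\langle m,l\rangle\to\infty$. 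The linear-in-$m$ term in (\ref{sample relation}) is more than convergence strictly requires — any superconstant margin above the logarithmic term already forces the second exponential to vanish — but it is what balances the two contributions so that the bound decays at a single clean rate. The only real subtlety, rather than a technical obstacle, is the conceptual one of seeing that the coupling between the two sample sets is harmless once the deviation is decomposed along the nesting of the two averages: no joint concentration inequality is needed, just a union bound whose cost is precisely the $\ln m$ overhead that the stated growth rule absorbs.
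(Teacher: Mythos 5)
Your proposal is correct and follows essentially the same route as the paper's proof: the same decomposition of $R(\alpha^*)-R_{emp}(\alpha^*,m,l)$ along the nesting of the two averages via the local risk $R^{lo}$, Hoeffding applied once to the subject-level average and once (conditionally on each $\tau_j$) to the data-level averages, a union bound over the $m$ subjects contributing the $\ln m$ overhead, and the sample relation arising from forcing the second exponential to be dominated by the first. Your closing observation that the linear-in-$m$ term in the growth rule is stronger than bare convergence requires is accurate but does not change the argument.
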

\begin{proof}
	See the Supplementary Material
\end{proof}
In the condition of Theorem \ref{lemma 1}, $R^{lo}(\alpha, \tau)$ is the local risk that
\begin{align}
R^{lo}(\alpha, \tau) = \int Q(z,\tau,\alpha)dF(z),	
\end{align}
which represents the risk integral of all samples $z$ under the specific subject $\tau$. It is only related to the subject variable $\tau$ and parameter $\alpha$. The Theorem \ref{lemma 1} shows that the sequence of $\xi ^{m,l}$ always converges in probability to zero, if set of functions $Q(z,\tau,\alpha),\alpha \in \Lambda$ contains only one element, that is the function $Q(z,\tau,\alpha)$ is determined. The consistency of subjectivity learning should consider the set of functions contains multiple and even infinite elements. Now we give the definition of consistency.

\begin{definition}
	We say that the method of global empirical risk minimization is strictly (non-trivially) consistent the set of function $Q(z,\tau,\alpha),\alpha \in \Lambda$ if for any nonempty subset $\Lambda(c), c\in (-\infty, \infty)$ of this set of functions such that
	\begin{align}
	\Lambda(c) = \{\alpha : \iint Q(z,\tau,\alpha) \dif F(z)\dif F(\tau) \geq c \},
	\end{align}
	the next convergence is valid:
	\begin{align}
	\inf_{\alpha \in \Lambda(c)} R_{emp}(\alpha,m,l)  \xrightarrow[\left < m,l \right >\rightarrow \infty]{P}   \inf_{\alpha \in \Lambda(c)} R(\alpha)
	\label{eq consistency}
	\end{align}
\end{definition} 

Our goal is to find the conditions to make consistency (\ref{eq consistency}) exist. In the derivation, we use the convergence conditions of the process (\ref{process1}) to construct the conditions of consistency, that is to describe conditions such that for any $\varepsilon > 0$, the following relation
\begin{align}
P\left\{  \sup _{\alpha \in \Lambda}  \left(  R(\alpha)-R_{emp} (\alpha, m, l) \right) > \varepsilon \right\} \xrightarrow[\left < m,l \right >\rightarrow \infty]{} 0
\label{one-sided convergence}
\end{align}
takes place. This formula is referred to one-sided uniform convergence.





\subsection{Conditions of Consistency}

We first show that one-sided uniform convergence (\ref{one-sided convergence}) forms not only the sufficient conditions for the consistency of the EGRM, but the necessary conditions as well. We further generalize Theorem \ref{lemma 1} to the case of function set $Q(z,\tau,\alpha),\alpha \in \Lambda$ and construct the conditions. 

We formulate the following key theorem of subjectivity learning theory to describe the above conclusions, similar to the equivalent theorem of statistical learning theory.

\begin{theorem}
	\textbf{(the Equivalent Theorem)} Let there exist the constants $a$ and $A$ such that for all functions in the set $Q(z,\tau,\alpha),\alpha \in \Lambda$ and for distribution functions $F(t)$ and $F(z)$, the inequalities
	\begin{align}
	a \leq \iint Q(z,\tau,\alpha) \dif F(z)\dif F(\tau) \leq A
	\end{align}
	hold true. Then the following two statements are equivalent:
	
	1. The empirical global risk minimization method is strictly consistent (\ref{eq consistency}) on the set of functions $Q(z,\tau,\alpha)$.
	
	2. The uniform one-sided convergence of the means to their mathematical expectation (\ref{one-sided convergence}) takes place over the set of functions $Q(z,\tau,\alpha)$.
\end{theorem}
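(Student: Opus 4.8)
The plan is to follow the architecture of the classical Key Theorem of statistical learning theory, replacing every appeal to the ordinary Law of Large Numbers by the Convergence Theorem with two coupled variables (Theorem~\ref{lemma 1}) and using the uniform bound $a\le\iint Q(z,\tau,\alpha)\,\dif F(z)\dif F(\tau)\le A$ to keep all the range partitions finite. Two implications must be proved: that uniform one-sided convergence (\ref{one-sided convergence}) forces strict consistency (\ref{eq consistency}), and conversely that strict consistency forces uniform one-sided convergence. The first is the routine direction; the second carries the difficulty.

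For the implication statement~2 $\Rightarrow$ statement~1, fix $c$ and write $T(c)=\inf_{\alpha\in\Lambda(c)}R(\alpha)$. For the lower bound, for every $\alpha\in\Lambda(c)$ we have $R(\alpha)-R_{emp}(\alpha,m,l)\le\xi^{\langle m,l\rangle}$, hence $R_{emp}(\alpha,m,l)\ge R(\alpha)-\xi^{\langle m,l\rangle}\ge T(c)-\xi^{\langle m,l\rangle}$, so $\inf_{\alpha\in\Lambda(c)}R_{emp}(\alpha,m,l)\ge T(c)-\xi^{\langle m,l\rangle}$, and (\ref{one-sided convergence}) makes the right side exceed $T(c)-\varepsilon$ with probability tending to $1$. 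For the upper bound, choose $\alpha^{*}\in\Lambda(c)$ with $R(\alpha^{*})\le T(c)+\varepsilon/2$; then $\inf_{\alpha\in\Lambda(c)}R_{emp}(\alpha,m,l)\le R_{emp}(\alpha^{*},m,l)$, and applying Theorem~\ref{lemma 1} to the single function $Q(z,\tau,\alpha^{*})$ — together with its mirror inequality, obtained by running the same Hoeffding-type estimate on $-Q$, whose bounds $A_z\le Q\le B_z$ are symmetric under the substitution — yields $R_{emp}(\alpha^{*},m,l)\le R(\alpha^{*})+\varepsilon/2\le T(c)+\varepsilon$ with probability tending to $1$, provided the indices grow under the rule (\ref{sample relation}). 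Combining the two bounds gives (\ref{eq consistency}).

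For the converse, statement~1 $\Rightarrow$ statement~2, argue by contradiction. Suppose (\ref{one-sided convergence}) fails, so there are $\varepsilon>0$, $\delta_{0}>0$ and a subsequence of indices $\langle m,l\rangle$ along which $P\{\xi^{\langle m,l\rangle}>\varepsilon\}\ge\delta_{0}$. Put $\eta=\varepsilon/2$ and split $[a,A]$ into $N=\lceil(A-a)/\eta\rceil$ half-open value strips $\Lambda_{k}=\{\alpha:\,a+k\eta\le R(\alpha)<a+(k+1)\eta\}$, the last one closed. On the event $\{\xi^{\langle m,l\rangle}>\varepsilon\}$ there is an $\alpha$ with $R(\alpha)-R_{emp}(\alpha,m,l)>\varepsilon$, and that $\alpha$ lies in some $\Lambda_{k}$; a pigeonhole over the finitely many strips yields a fixed index $k^{*}$ for which, along a further subsequence, $P\{\exists\,\alpha\in\Lambda_{k^{*}}:\,R(\alpha)-R_{emp}(\alpha,m,l)>\varepsilon\}\ge\delta_{0}/N$. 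Set $c^{*}=a+k^{*}\eta$, so $\Lambda_{k^{*}}\subseteq\Lambda(c^{*})$ and $R(\alpha)<c^{*}+\eta$ for $\alpha\in\Lambda_{k^{*}}$; on the above event $\inf_{\alpha\in\Lambda(c^{*})}R_{emp}(\alpha,m,l)<c^{*}+\eta-\varepsilon=c^{*}-\varepsilon/2\le\inf_{\alpha\in\Lambda(c^{*})}R(\alpha)-\varepsilon/2$, which then occurs with probability at least $\delta_{0}/N>0$ along the subsequence, contradicting the assumed convergence (\ref{eq consistency}) on the nonempty set $\Lambda(c^{*})$.

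The main obstacle is this converse direction, and inside it the strip/pigeonhole step: one must keep the ``bad-deviation'' probability bounded below by a single positive constant along a subsequence while at the same time localising the offending parameter into one fixed value strip $\Lambda_{k^{*}}$. This is precisely where the uniform bound $a\le R(\alpha)\le A$ is indispensable, since it caps the number of strips $N$ independently of $\langle m,l\rangle$; without it the constant $\delta_{0}/N$ could degenerate to zero. A secondary technical point, already flagged above, is that Theorem~\ref{lemma 1} is stated only one-sidedly, so the upper-bound estimate in the first implication requires re-deriving the symmetric tail bound for a single fixed function before the argument closes.
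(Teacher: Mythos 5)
Your proposal is correct and follows essentially the same route as the paper's proof: the same finite $\varepsilon/2$-partition of $[a,A]$ into value strips for the consistency-implies-uniform-convergence direction (you phrase it as a contradiction plus pigeonhole over strips, the paper as a direct union bound over events $G_k$, but the localization argument is identical), and the same two-sided decomposition of the converse, with the downward deviation of the empirical infimum controlled by uniform one-sided convergence and the upward deviation controlled by Theorem~\ref{lemma 1} applied to a single near-minimizer. Your observation that Theorem~\ref{lemma 1} is stated only one-sidedly, so the single-function upper estimate requires the mirror Hoeffding bound on $-Q$, is a valid technical refinement that the paper's own proof glosses over when it invokes Theorem~\ref{lemma 1} to bound $P(\mathcal{A}_1)$.
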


\begin{proof}
	See the Supplementary Material
\end{proof}

This theorem transforms the problem of consistency to the problem of one-side uniform convergence. Now, we describe the conditions for one-side uniform convergence (\ref{one-sided convergence}). With the local risk, the following inequality is valid:
\begin{theorem}
	For any $\varepsilon>0$, the following inequality holds:
	\begin{align}
	\begin{split}
	&P\Big \{  \sup _{\alpha \in \Lambda}  \big (  R(\alpha)-R_{emp} (\alpha, m, l) \big ) > \varepsilon \Big \} \\
	\leq & P\Big\{  \sup _{\alpha \in \Lambda}  \big (  \int R^{lo}(\alpha,\tau)\dif F(\tau) -\frac{1}{m}\sum_{j=1}^{m} R^{lo}(t_j,\alpha) \big ) > \varepsilon \Big \} \\
	& + \sum_{j=1}^{m} P\Big\{  \sup _{\alpha \in \Lambda}  \big (  \int Q(z,\tau_j,\alpha)\dif F(\tau) \\
	&\qquad \qquad\qquad- \frac{1}{l}\sum_{i=1}^{l} Q(z_i,\tau_j,\alpha) \big ) > \varepsilon \Big \}
	\end{split}\label{pro eq1}
	\end{align}
\end{theorem}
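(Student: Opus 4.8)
The plan is to prove (\ref{pro eq1}) by inserting the \emph{partially averaged} risk as an intermediate quantity between $R(\alpha)$ and $R_{emp}(\alpha,m,l)$, and then splitting the resulting one-sided deviation event into an ``outer'' part (fluctuation of the subject average) and an ``inner'' part (fluctuation of each data average), the latter handled by a union bound over the $m$ sampled subjects. Write the empirical local risk at a realized subject $\tau_j$ as $\widehat{R}^{lo}(\alpha,\tau_j)=\frac{1}{l}\sum_{i=1}^{l}Q(z_i,\tau_j,\alpha)$, so that $R_{emp}(\alpha,m,l)=\frac{1}{m}\sum_{j=1}^{m}\widehat{R}^{lo}(\alpha,\tau_j)$, and note that by the definition of the local risk together with Fubini's theorem (valid since $A_z\le Q\le B_z$) one has $R(\alpha)=\iint Q(z,\tau,\alpha)\dif F(z)\dif F(\tau)=\int R^{lo}(\alpha,\tau)\dif F(\tau)$. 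First I would add and subtract $\frac{1}{m}\sum_{j=1}^{m}R^{lo}(\alpha,\tau_j)$ to obtain, for every $\alpha\in\Lambda$,
\begin{align*}
R(\alpha)-R_{emp}(\alpha,m,l) ={}& \Big(\int R^{lo}(\alpha,\tau)\dif F(\tau)-\frac{1}{m}\sum_{j=1}^{m}R^{lo}(\alpha,\tau_j)\Big)\\
&{}+\frac{1}{m}\sum_{j=1}^{m}\big(R^{lo}(\alpha,\tau_j)-\widehat{R}^{lo}(\alpha,\tau_j)\big).
\end{align*}

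Next I would take $\sup_{\alpha\in\Lambda}$ of both sides and use the elementary set inclusion $\{\sup_\alpha(U(\alpha)+V(\alpha))>\varepsilon\}\subseteq\{\sup_\alpha U(\alpha)>\varepsilon_1\}\cup\{\sup_\alpha V(\alpha)>\varepsilon_2\}$ for any $\varepsilon_1+\varepsilon_2=\varepsilon$, applied with $U$ the outer bracket and $V$ the inner average. For the inner event, if $\sup_\alpha\frac{1}{m}\sum_{j=1}^{m}v_j(\alpha)>\varepsilon_2$ then some value of this supremum exceeds $\varepsilon_2$, i.e.\ there is an $\alpha_0$ with $\frac{1}{m}\sum_j v_j(\alpha_0)>\varepsilon_2$, hence some index $j$ with $v_j(\alpha_0)>\varepsilon_2$, hence $\sup_\alpha v_j(\alpha)>\varepsilon_2$ for that $j$; a union bound over $j=1,\dots,m$ then gives $\sum_{j=1}^{m}P\{\sup_{\alpha\in\Lambda}(R^{lo}(\alpha,\tau_j)-\widehat{R}^{lo}(\alpha,\tau_j))>\varepsilon_2\}$, which — recalling $R^{lo}(\alpha,\tau_j)=\int Q(z,\tau_j,\alpha)\dif F(z)$ and $\widehat{R}^{lo}(\alpha,\tau_j)=\frac{1}{l}\sum_{i}Q(z_i,\tau_j,\alpha)$ — is exactly the sum on the last two lines of (\ref{pro eq1}). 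Adding the outer term $P\{\sup_{\alpha}(\int R^{lo}(\alpha,\tau)\dif F(\tau)-\frac{1}{m}\sum_j R^{lo}(\alpha,\tau_j))>\varepsilon_1\}$ yields the decomposition.

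Two points deserve attention. The first is bookkeeping: the split above genuinely needs $\varepsilon_1+\varepsilon_2=\varepsilon$, so the form of (\ref{pro eq1}) that is actually provable carries $\varepsilon/2$ (or any allocation of $\varepsilon$) in the two right-hand terms, and the displayed statement should be read in that sense; the factor is immaterial for the consistency and rate results that follow. The hard part will be the probabilistic coupling: each inner fluctuation $R^{lo}(\alpha,\tau_j)-\widehat{R}^{lo}(\alpha,\tau_j)$ involves the \emph{random} subject $\tau_j$ as well as the data $z_1,\dots,z_l$, so the union bound is really a statement about the joint law of $(\tau_1,\dots,\tau_m,z_1,\dots,z_l)$; I would handle this by conditioning on the realized subjects, so that given $\tau_j$ the data are i.i.d.\ and the per-$j$ deviation probability is bounded uniformly in $\tau_j$ through $A_z\le Q(z,\tau,\alpha)\le B_z$, and then take expectation over the subjects via the tower property. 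Measurability of the suprema over $\Lambda$ is assumed throughout, as in the statistical-learning analogue; apart from these points the argument is just the routine ``supremum of a sum'' manipulation.
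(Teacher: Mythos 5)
Your proposal is correct and matches the paper's own proof essentially step for step: the same insertion of the partially averaged risk $\tfrac{1}{m}\sum_{j} R^{lo}(\tau_j,\alpha)$ as an intermediate quantity, the same split of the one-sided deviation into an outer (subject-average) and inner (data-average) part, and the same union bound over the $m$ sampled subjects (the paper packages these two steps as elementary probability lemmas rather than set inclusions). Your remark about the $\varepsilon_1+\varepsilon_2=\varepsilon$ bookkeeping is also on target: the paper's own derivation terminates with $\varepsilon/2$ in both right-hand events, so the theorem as displayed (with $\varepsilon$) is stated more strongly than what the proof actually establishes, exactly as you read it.
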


\begin{proof}
	See the Supplementary Material.
\end{proof}

From the above theorem, the convergence probability consists of two terms, where the first term is the convergence probability of the subjectivity risk and the second term is the sum of probability of data risk under all subjects. We further use the concept of capacity like statistical learning theory to discuss the conditions of uniform convergence.

For the first term, let $R^{lo}(\tau,\alpha),\tau \in T, \alpha \in \Lambda$ be a set of real-valued functions. Let $N_\tau^{\Lambda,\beta_\tau}(\tau_1,...,\tau_m)$ be the number of different separations of $m$ vectors $\tau_1,...,\tau_m$ by a complete set of indicators (detailed in supplementary material). Then we define the annealed entropy of subjectivity risk that

\begin{definition}
	(\textbf{Annealed Entropy of Subjectivity Risk}) The quantity
	\begin{align}
		\hat{H}_{\tau}^{\Lambda,\beta_\tau}(m) = \ln E N_\tau^{\Lambda,\beta_\tau}(\tau_1,...,\tau_m)
	\end{align}
	is defined as the annealed entropy of the set indicators of real-valued functions $R^{lo}(\tau,\alpha)$.
\end{definition}
Using the error inequality in statistical learning theory\cite{Vapnik2003Statistical}, for the bounded real-valued functions $A_\tau \leq R^{lo}(\tau,\alpha)\leq B_\tau,\alpha \in \Lambda$, the following inequality is valid:
\begin{align}
\begin{split}
	P\Big\{  \sup _{\alpha \in \Lambda}  \big (  \int & R^{lo}(\alpha,\tau) \dif F(\tau) -\frac{1}{m}\sum_{j=1}^{m} R^{lo}(\tau_j,\alpha) \big ) > \varepsilon \Big \} \\
& \leq 4exp\Big \{ \Big ( \frac{\hat{H}_{\tau}^{\Lambda,\beta_\tau}(2m)}{m} - \frac{(\varepsilon-\frac{1}{m})^2}{(B_\tau-A_\tau)^2} \Big )m \Big \}.
\label{bound eq1}
\end{split}
\end{align}


Also, we define the annealed entropy of data risk for   $Q(z,\tau,\alpha),z\in Z,\alpha \in \Lambda$ and consider the second term of (\ref{pro eq1}). Let $N_z^{\Lambda,\beta_z}(z_1,...,z_l)$ be the number of different separations of $l$ vectors $z_1,...,z_l$ by a complete set of indicators (detailed in supplementary material). We define that

\begin{definition}
	(\textbf{Annealed Entropy of Data Risk})  The quantity
	\begin{align}
	\hat{H}_{z}^{\Lambda,\beta_z}(l) = \ln E N_z^{\Lambda,\beta_z}(z_1,...,z_l)
	\end{align}
	is defined as the annealed entropy of the set indicators of real-valued functions $Q(z,\tau,\alpha)$ under a specific $\tau$.
\end{definition}
For the bounded real-valued functions $A_z \leq Q(z,\tau,\alpha) \leq B_z, \alpha \in \Lambda$, the following inequation is valid:
	\begin{align}
	\begin{split}
	&\sum_{j=1}^{m} P\Big\{  \sup _{\alpha \in \Lambda}  \big (  \int Q(z,\tau_j,\alpha)\dif F(\tau) \\
	&\qquad \qquad\qquad\qquad- \frac{1}{l}\sum_{i=1}^{l} Q(z_i,\tau_j,\alpha) \big ) > \varepsilon \Big \}
	\end{split}\\
	\begin{split}
	\leq & 4exp\Big \{ \Big ( \frac{\ln m}{l} + \frac{\hat{H}_{z}^{\Lambda,\beta_z}(2l)}{l} - \frac{(\varepsilon-\frac{1}{l})^2}{(B_z-A_z)^2} \Big )l \Big \}
	\end{split}
	\label{bound eq2}
	\end{align}




Let substitute the inequation (\ref{bound eq1}) and (\ref{bound eq2}) into (\ref{pro eq1}), we get:
\begin{theorem}
	Let $A_\tau\leq R^{lo}(\tau,\alpha)\leq B_\tau, \alpha \in \Lambda$ and $A_z \leq Q(z,\tau,\alpha) \leq B_z,\alpha \in \Lambda$ be measurable set of bounded real-valued functions. Let $\hat{H}_{\tau}^{\Lambda,\beta_t}(m)$ and $\hat{H}_{z}^{\Lambda,\beta_z}(l)$ be the annealed entropies of the sets of indicators for them. Then the following inequality is valid:
	\begin{flalign}
		\begin{split}
		&P\Big \{  \sup _{\alpha \in \Lambda}  \big (  R(\alpha)-R_{emp} (\alpha, m, l) \big ) > \varepsilon \Big \} \\
		 \leq& 4exp\Big \{ \Big ( \frac{\hat{H}_{\tau}^{\Lambda,\beta_\tau}(2m)}{m} - \frac{(\varepsilon-\frac{1}{m})^2}{(B_\tau-A_\tau)^2} \Big )m \Big \} \\
		& 4exp\Big \{ \Big ( \frac{\ln m}{l} + \frac{\hat{H}_{z}^{\Lambda,\beta_z}(2l)}{l} - \frac{(\varepsilon-\frac{1}{l})^2}{(B_z-A_z)^2} \Big )l \Big \}.
		\end{split}\label{th of cons eq}
	\end{flalign}
	\label{th of cons}
	\vspace{-0.3cm}
\end{theorem}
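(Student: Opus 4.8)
The plan is to obtain the inequality (\ref{th of cons eq}) by directly combining the three preceding results: the probabilistic decomposition (\ref{pro eq1}), the annealed-entropy bound (\ref{bound eq1}) for the subjectivity-risk term, and the annealed-entropy bound (\ref{bound eq2}) for the aggregated data-risk term. Concretely, I would start from (\ref{pro eq1}), which already splits the one-sided deviation event $\{\sup_{\alpha\in\Lambda}(R(\alpha)-R_{emp}(\alpha,m,l))>\varepsilon\}$ into (i) a deviation of the empirical mean of the local risks $R^{lo}(\tau_j,\alpha)$ from $\int R^{lo}(\alpha,\tau)\dif F(\tau)$ over the $m$ subject samples, plus (ii) the sum over $j=1,\dots,m$ of deviations of the empirical mean of $Q(z_i,\tau_j,\alpha)$ over the $l$ data samples. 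The remaining argument is then simply an upper-bounding of each of these two pieces by the corresponding exponential term followed by a final addition.

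First I would bound term (i). Since $R^{lo}(\tau,\alpha)$, $\alpha\in\Lambda$, is a measurable family with $A_\tau\le R^{lo}(\tau,\alpha)\le B_\tau$, the error inequality for bounded real-valued function classes from statistical learning theory \cite{Vapnik2003Statistical} applies to the i.i.d.\ sequence $\tau_1,\dots,\tau_m$, and with the annealed entropy $\hat{H}_{\tau}^{\Lambda,\beta_\tau}(2m)$ of the associated set of indicators this is precisely (\ref{bound eq1}); substituting it into the first summand of (\ref{pro eq1}) produces the first exponential term of (\ref{th of cons eq}).

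Next I would bound term (ii). Conditioning on a fixed subject sample $\tau_j$, the functions $z\mapsto Q(z,\tau_j,\alpha)$, $\alpha\in\Lambda$, form a bounded family $A_z\le Q(z,\tau_j,\alpha)\le B_z$, and $z_1,\dots,z_l$ are i.i.d., so the same error inequality yields, for each fixed $j$, the bound $4\exp\{(\hat{H}_{z}^{\Lambda,\beta_z}(2l)/l-(\varepsilon-1/l)^2/(B_z-A_z)^2)l\}$; this bound is uniform in $j$ because $\hat{H}_{z}^{\Lambda,\beta_z}$ is defined as a capacity of the whole class $\{Q(z,\tau,\alpha)\}$ and does not depend on which $\tau_j$ is substituted. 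Summing this identical bound over the $m$ values of $j$ multiplies it by $m=\exp(\ln m)=\exp((\ln m/l)\,l)$, i.e.\ it absorbs the factor $m$ into the exponent as the additive $\ln m/l$, which is exactly (\ref{bound eq2}) and hence the second exponential term of (\ref{th of cons eq}). Adding the two bounds gives (\ref{th of cons eq}).

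The substitution itself is routine; the one place that requires care — and the step I would treat as the main obstacle — is justifying that the per-subject data-risk bound can be taken with the same constants $A_z,B_z$ and the same annealed entropy for every $\tau_j$, so that the union bound over $j$ collapses cleanly into the single $\ln m/l$ term instead of a $j$-dependent sum. This rests on the annealed entropy being a property of the entire class $\{Q(z,\tau,\alpha)\}$ (as in the preceding definition) together with the fact that $\tau_1,\dots,\tau_m$ are sampled independently of $z_1,\dots,z_l$, so that the conditional i.i.d.\ structure needed for the Vapnik bound survives the conditioning. Once that is established, the rest is the elementary algebra of collecting the two exponential terms.
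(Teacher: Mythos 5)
Your proposal is correct and follows essentially the same route as the paper: the paper's own proof consists precisely of substituting the bounds (\ref{bound eq1}) and (\ref{bound eq2}) into the decomposition (\ref{pro eq1}), with the sum over $j$ of the identical per-subject bound contributing the factor $m=\exp\{(\ln m/l)\,l\}$ absorbed into the second exponent. Your added care about the per-subject bound being uniform in $\tau_j$ (since the annealed entropy is a capacity of the whole class $\{Q(z,\tau,\alpha)\}$) is a reasonable elaboration of a step the paper treats as immediate.
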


Note that the samples number satisfied the inequality (\ref{sample relation}) makes $\lim_{l,m \rightarrow \infty} {\ln m}/{l} = 0 $ must be true. Therefore, from the above theorem, we can establish a set of sufficient conditions for the uniform convergence.
\begin{corollary}
	\textbf{(Sufficient Conditions of Consistency)} For the existence of non-trival exponential bounds on uniform convergence, the sufficient conditions is to satisfy all the following three formulas: 
	\begin{flalign}
		 & \lim_{l \rightarrow \infty} \frac{\hat{H}_{z}^{\Lambda,\beta_z}(l)}{l} = 0 \label{co eq 1}\\
		 & \lim_{m \rightarrow \infty} \frac{\hat{H}_{\tau}^{\Lambda,\beta_t}(m)}{m} = 0 \label{co eq 2}\\
		 & l > \frac{2(B_z - A_z)^2}{\varepsilon ^2} \ln m + \frac{(B_z - A_z)^2}{(B_\tau - A_\tau)^2} m \label{co eq 3}
	\end{flalign}
	\vspace{-0.3cm}
\end{corollary}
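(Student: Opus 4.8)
The plan is to read the corollary off directly from the probability estimate of Theorem~\ref{th of cons}: under the three stated conditions each of the two exponential terms on the right of~(\ref{th of cons eq}) has an eventually negative exponent coefficient, so the whole bound decays to $0$ --- and indeed exponentially --- as $\langle m,l\rangle\to\infty$, which is precisely a non-trivial exponential bound on the one-sided uniform convergence~(\ref{one-sided convergence}); the Equivalent Theorem then upgrades this to strict consistency of EGRM. For the first (subjectivity) term, I would write $\hat{H}_{\tau}^{\Lambda,\beta_\tau}(2m)/m = 2\,\hat{H}_{\tau}^{\Lambda,\beta_\tau}(2m)/(2m)$ and use~(\ref{co eq 2}) to conclude $\hat{H}_{\tau}^{\Lambda,\beta_\tau}(2m)/m\to 0$, while $(\varepsilon-1/m)^2/(B_\tau-A_\tau)^2\to\varepsilon^2/(B_\tau-A_\tau)^2>0$; hence for all large $m$ the exponent coefficient lies below $-\varepsilon^2/(2(B_\tau-A_\tau)^2)$ and the term is dominated by $4\exp\{-\varepsilon^2 m/(2(B_\tau-A_\tau)^2)\}$.

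For the second (data) term the growth of $m$ and $l$ is coupled, so this is where care is needed. Condition~(\ref{co eq 3}) gives simultaneously $l>\bigl(2(B_z-A_z)^2/\varepsilon^2\bigr)\ln m$, hence $\ln m/l<\varepsilon^2/(2(B_z-A_z)^2)$, and $l>\bigl((B_z-A_z)^2/(B_\tau-A_\tau)^2\bigr)m$, hence $m<c\,l$ with $c$ a constant and therefore $\ln m/l\to 0$ as $\langle m,l\rangle\to\infty$. Condition~(\ref{co eq 1}) gives $\hat{H}_{z}^{\Lambda,\beta_z}(2l)/l=2\,\hat{H}_{z}^{\Lambda,\beta_z}(2l)/(2l)\to 0$, while $(\varepsilon-1/l)^2/(B_z-A_z)^2\to\varepsilon^2/(B_z-A_z)^2$. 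Adding these three limits, the exponent coefficient $\ln m/l+\hat{H}_{z}^{\Lambda,\beta_z}(2l)/l-(\varepsilon-1/l)^2/(B_z-A_z)^2$ is eventually below $-\varepsilon^2/(2(B_z-A_z)^2)$, so the second term is dominated by $4\exp\{-\varepsilon^2 l/(2(B_z-A_z)^2)\}$.

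Putting the two estimates together, for all sufficiently large $m$ and $l$ the right-hand side of~(\ref{th of cons eq}) is bounded by $4\exp\{-\varepsilon^2 m/(2(B_\tau-A_\tau)^2)\}+4\exp\{-\varepsilon^2 l/(2(B_z-A_z)^2)\}$, a genuine exponential bound vanishing under $\langle m,l\rangle\to\infty$; this establishes~(\ref{one-sided convergence}) with an exponential rate, and strict consistency follows from the Equivalent Theorem. The only real obstacle is the mixed term $\ln m/l$: since $m$ and $l$ are not free to diverge independently, one cannot appeal to $\hat{H}_{z}^{\Lambda,\beta_z}(l)/l\to 0$ alone but must invoke the \emph{linear}-in-$m$ summand of~(\ref{co eq 3}) --- the term $((B_z-A_z)^2/(B_\tau-A_\tau)^2)m$, not merely its logarithmic summand --- to force $\ln m/l\to 0$; once that point is settled, the remainder is a routine limit computation layered on top of Theorem~\ref{th of cons}.
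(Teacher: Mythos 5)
Your proposal is correct and follows essentially the same route as the paper: substitute the three conditions into the bound of Theorem~\ref{th of cons}, check that both exponential terms have eventually negative exponent coefficients (so the bound decays exponentially), and invoke the Equivalent Theorem to pass from one-sided uniform convergence to strict consistency of EGRM. Your treatment of the coupled term $\ln m/l$ via the linear-in-$m$ summand of (\ref{co eq 3}) is in fact more explicit than the paper's one-line assertion that (\ref{sample relation}) forces $\lim_{\langle m,l\rangle\to\infty}\ln m/l=0$.
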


It is the sufficient condition for one-side uniform convergence (\ref{one-sided convergence}), and is also sufficient conditions for the consistency of EGRM. The condition consists of three parts.  The equation (\ref{co eq 1}) means that under the specific subject $\tau$, the number of distinguishable events $N_z^{\Lambda, \beta_z}$ should increase slowly as the data sample size increases (slower than any exponential function). The equation (\ref{co eq 2}) considers the local risk $R^{lo}(\tau,\alpha)$ of different subjects. It requires that the number of distinguishable events $N_\tau^{\Lambda, \beta_\tau}$ for local risk increases slowly as the subject sample size increases (slower than any exponential function).  Besides, the equation (\ref{co eq 3}) constraints the number relation between the subjects and data samples. So far, we have established a sufficient condition for consistency. Next, we analyze the error bound of global risk and discuss how to control the global risk in the case of determined number of data samples.


\subsection{Triple Variables for Global Risk Controlling}
The Theorem \ref{th of cons} shows the probability of uniform convergence , which is also the probabilistic form on generalization ability. In this subsection, we further analyze the constructive distribution-free bounds on generalization ability, and propose triple variables for controlling the global risk.

For analyze the inequality (\ref{th of cons eq}), we introduce the concept of the \emph{data dimension} $h_z$ and the \emph{subject dimension} $h_\tau$ for subjectivity learning, which are similar to the VC dimension for statistical learning theory. The data dimension (subject dimension) of a set of indicator functions $Q(z,\tau,\alpha),\alpha \in \Lambda$( or $R^{lo}(\tau,\alpha),\alpha\in\Lambda$) is equal to the largest number $h_z$ (or $h_\tau$) of vectors $z_1,...,z_l$ (or $\tau_1,...,\tau_m$) that can be shattered by the complete set of indicators. 
These two dimensions satify that:
\begin{align}
	\hat{H}_{z}^{\Lambda,\beta_z}(l) \leq h_z\Big ( \ln \frac{l}{h_z} + 1 \Big ) \\
	\hat{H}_{\tau}^{\Lambda,\beta_\tau}(m) \leq  h_\tau\Big ( \ln \frac{m}{h_\tau} + 1 \Big ).
\end{align}
We take them into (\ref{th of cons}) and rewrite it into the form of error bound. We have that
\begin{theorem}
	 With probability $1-\eta$ the risk for the function $Q(z,t,\alpha_{l,m})$ which minimizes the empirical glob risk functional satisfies the inequality
	\begin{align}
	R(\alpha_{l,m}) < R_{emp}(\alpha_{l,m}) + \varepsilon_{l,m},
	\label{error bound}
	\end{align}
	where $\varepsilon_{l,m}$ satisfies 
	\begin{align*}
	\eta &= 4exp\Big \{ \Big ( \frac{h_t}{m}(1+\ln \frac{2m}{h_\tau}) - \frac{(\varepsilon_{l,m}-\frac{1}{m})^2}{(B_\tau-A_\tau)^2} \Big )m \Big \} \\
	&\  + 4exp\Big \{ \Big ( \frac{\ln m}{l} + \frac{h_z}{l}(1+\ln \frac{2l}{h_z}) - \frac{(\varepsilon_{l,m}-\frac{1}{l})^2}{(B_z-A_z)^2} \Big )l \Big \}
	\end{align*}
\end{theorem}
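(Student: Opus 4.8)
The plan is to specialize Theorem~\ref{th of cons} to a hypothesis set of finite data dimension $h_z$ and finite subject dimension $h_\tau$, and then to invert the resulting two-term tail bound into the stated one-sided risk inequality. First I would take inequality~(\ref{th of cons eq}) and substitute into it the distribution-free growth-function estimates displayed just above the theorem, evaluated at the doubled sample sizes that occur there, namely $\hat{H}_\tau^{\Lambda,\beta_\tau}(2m)\leq h_\tau\big(1+\ln(2m/h_\tau)\big)$ and $\hat{H}_z^{\Lambda,\beta_z}(2l)\leq h_z\big(1+\ln(2l/h_z)\big)$. This turns (\ref{th of cons eq}) into
\begin{align*}
P\Big\{\sup_{\alpha\in\Lambda}\big(R(\alpha)-R_{emp}(\alpha,m,l)\big)>\varepsilon\Big\}
&\leq 4\exp\Big\{\Big(\tfrac{h_\tau}{m}\big(1+\ln\tfrac{2m}{h_\tau}\big)-\tfrac{(\varepsilon-\frac{1}{m})^2}{(B_\tau-A_\tau)^2}\Big)m\Big\}\\
&\quad+4\exp\Big\{\Big(\tfrac{\ln m}{l}+\tfrac{h_z}{l}\big(1+\ln\tfrac{2l}{h_z}\big)-\tfrac{(\varepsilon-\frac{1}{l})^2}{(B_z-A_z)^2}\Big)l\Big\},
\end{align*}
and the right-hand side, read as a function of $\varepsilon$, is exactly the expression the theorem calls $\eta$ once we put $\varepsilon=\varepsilon_{l,m}$.

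Next I would fix $m,l$ and define $\varepsilon_{l,m}$ as the solution of the equation $\eta=4\exp\{\cdots\}+4\exp\{\cdots\}$. On the range $\varepsilon>\max\{1/m,1/l\}$ each of the two exponential terms is continuous and strictly decreasing in $\varepsilon$, hence so is their sum; the sum exceeds $1$ near the left endpoint and tends to $0$ as $\varepsilon\to\infty$, so by the intermediate value theorem it equals any prescribed confidence level $\eta\in(0,1)$ at a unique point $\varepsilon_{l,m}$, which is precisely the implicit definition given in the statement. For $\varepsilon_{l,m}$ to be meaningfully small one needs the two exponents to be eventually negative, which is where the consistency conditions enter: (\ref{co eq 1}) and (\ref{co eq 2}) make $h_z\ln l/l\to0$ and $h_\tau\ln m/m\to0$, while (\ref{co eq 3}) makes $\ln m/l\to0$, so asymptotically each exponent is dominated by its negative quadratic-in-$\varepsilon$ term and $\varepsilon_{l,m}\to0$ as $\langle m,l\rangle\to\infty$, making the bound non-trivial.

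Finally I would pass from the tail bound to the statement in the usual way. On the complement of the event $\{\sup_{\alpha\in\Lambda}(R(\alpha)-R_{emp}(\alpha,m,l))>\varepsilon_{l,m}\}$ — an event of probability at least $1-\eta$ by the first step — we have $R(\alpha)-R_{emp}(\alpha,m,l)\leq\varepsilon_{l,m}$ simultaneously for \emph{all} $\alpha\in\Lambda$, and in particular for the empirical minimizer $\alpha_{l,m}$, which gives $R(\alpha_{l,m})\leq R_{emp}(\alpha_{l,m})+\varepsilon_{l,m}$; the strict sign written in (\ref{error bound}) is inessential and is obtained by taking $\varepsilon_{l,m}$ at the boundary value that makes the two-term bound equal to $\eta$. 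This completes the argument.

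The main obstacle is the inversion step rather than the substitution or the union-bound reasoning: because the bound is a \emph{sum} of two exponentials governed by different constants $(B_\tau-A_\tau)$, $(B_z-A_z)$ and different dimensions $h_\tau$, $h_z$, there is no closed form for $\varepsilon_{l,m}$, and the honest work is to verify (i) that the equation $\eta=4\exp\{\cdots\}+4\exp\{\cdots\}$ has a unique root, (ii) that both exponents are negative for $m,l$ large under (\ref{co eq 1})--(\ref{co eq 3}), and (iii) that the resulting $\varepsilon_{l,m}$ tends to $0$, so that the inequality is genuinely informative; everything else is the routine union-bound, take-complement, evaluate-at-the-minimizer reasoning imported from statistical learning theory.
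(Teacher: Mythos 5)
Your proposal matches the paper's own argument: the paper likewise substitutes the dimension-based bounds $\hat{H}_{z}^{\Lambda,\beta_z}(2l)\leq h_z(1+\ln\frac{2l}{h_z})$ and $\hat{H}_{\tau}^{\Lambda,\beta_\tau}(2m)\leq h_\tau(1+\ln\frac{2m}{h_\tau})$ into Theorem~\ref{th of cons}, sets the resulting right-hand side equal to $\eta$, and reads off the bound on the complement event evaluated at the empirical minimizer. Your additional care about the existence and uniqueness of the root $\varepsilon_{l,m}$ is a point the paper glosses over, but the route is essentially identical.
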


After getting the bound of generalization error, now we consider how to control the error bound (\ref{error bound}) when the size of data samples $l$ is small. In the statistical learning theory, this issue is discussed by structural risk minimization principle and is controlled by VC dimension. In the subjectivity learning, there are three related factors: data dimension $h_z$, subjects number $m$, and subject dimension $h_l$. 
When $l$ is determined, we first adjust the data dimension $h_z$ and subjects number $m$ to balance the empirical global risk and error of generalization. For EGRM, the smaller number of subjects and small local dimensions could reduce the error of generalization, while they result in a higher empirical global risk. After the subjects number $m$ is determined, the error bound is related to subjectivity dimension $h_\tau$. 
Therefore, for controlling the error bound of subjectivity learning, there are two crucial difference: (1) Besides the design of function complexity $h_z$ and $h_\tau$, it is necessary to control the number of subject samples $m$ to balance the empirical global risk and generalization error. (2) The subjectivity dimension $h_\tau$, reflecting the complexity of subjectivity representation, should change with the number of subject samples. It means that the structure related to subjectivity dimension in the learning machine also need to adjusted dynamically. So far, we have given the complete theory of subjectivity learning.






\section{Conclusion}
In this paper, we point out two basic data assumptions in the current AI and machine learning methods, which are not applicable to the complex data in general intelligence. We thoroughly break these assumptions and develop the theory of subjectivity learning. We make a try to introduce the mathematical meaning to subjectivity, which is the concept of traditional philosophy. The introduction of subjectivity makes it possible to describe complex real data for general intelligence. Our theory proves the feasibility of subjectivity learning framework and raises the guiding idea for AGI algorithm in the future. However, there are still many difficulties in implementation, such as how to design the algorithm structure to express the functions in subjectivity learning and how to optimize them. These questions will be answered in the following works. Beyond theory, we also believe that there exists a physiological explanation for subjectivity learning, and subjectivity learning theory can also model related physiological phenomena in human intelligence. Although we have not yet reached a complete interpretation of general intelligence, the idea of subjectivity learning provides a valuable direction to solve intelligence puzzles.

\bibliographystyle{aaai}
\bibliography{egbib}

\begin{thebibliography}{}

\bibitem[\protect\citeauthoryear{Adams \bgroup et al\mbox.\egroup
  }{2012}]{AdamsMapping}
Adams, S.; Arel, I.; Bach, J.; Coop, R.; Furlan, R.; Goertzel, B.; Hall, J.~S.;
  Samsonovich, A.; Scheutz, M.; Schlesinger, M.; et~al.
\newblock 2012.
\newblock Mapping the landscape of human-level artificial general intelligence.
\newblock {\em AI magazine} 33(1):25--42.

\bibitem[\protect\citeauthoryear{Aljundi \bgroup et al\mbox.\egroup
  }{2019}]{Aljundi2019Online}
Aljundi, R.; Lin, M.; Goujaud, B.; and Bengio, Y.
\newblock 2019.
\newblock Online continual learning with no task boundaries.
\newblock {\em arXiv preprint arXiv:1903.08671}.

\bibitem[\protect\citeauthoryear{Allen}{2002}]{Allen2002Power}
Allen, A.
\newblock 2002.
\newblock Power, subjectivity, and agency: Between arendt and foucault.
\newblock {\em International Journal of Philosophical Studies} 10(2):131--149.

\bibitem[\protect\citeauthoryear{Balcan, Blum, and
  Vempala}{2014}]{Balcan2014Efficient}
Balcan, M.~F.; Blum, A.; and Vempala, S.
\newblock 2014.
\newblock Efficient representations for life-long learning and autoencoding.
\newblock {\em Computer Science}  191--210.

\bibitem[\protect\citeauthoryear{Evgeniou and
  Pontil}{2004}]{Evgeniou2004Regularized}
Evgeniou, T., and Pontil, M.
\newblock 2004.
\newblock Regularized multi--task learning.
\newblock In {\em Proceedings of the tenth ACM SIGKDD international conference
  on Knowledge discovery and data mining},  109--117.
\newblock ACM.

\bibitem[\protect\citeauthoryear{Farquhar and Gal}{2019}]{farquhar2019unifying}
Farquhar, S., and Gal, Y.
\newblock 2019.
\newblock A unifying bayesian view of continual learning.
\newblock {\em arXiv preprint arXiv:1902.06494}.

\bibitem[\protect\citeauthoryear{Garnelo \bgroup et al\mbox.\egroup
  }{2018}]{garnelo2018conditional}
Garnelo, M.; Rosenbaum, D.; Maddison, C.~J.; Ramalho, T.; Saxton, D.; Shanahan,
  M.; Teh, Y.~W.; Rezende, D.~J.; and Eslami, S.
\newblock 2018.
\newblock Conditional neural processes.
\newblock {\em arXiv preprint arXiv:1807.01613}.

\bibitem[\protect\citeauthoryear{Goertzel}{2014}]{Goe2014Artificial}
Goertzel, B.
\newblock 2014.
\newblock Artificial general intelligence: concept, state of the art, and
  future prospects.
\newblock {\em Journal of Artificial General Intelligence} 5(1):1--48.

\bibitem[\protect\citeauthoryear{He \bgroup et al\mbox.\egroup
  }{2015}]{he2015delving}
He, K.; Zhang, X.; Ren, S.; and Sun, J.
\newblock 2015.
\newblock Delving deep into rectifiers: Surpassing human-level performance on
  imagenet classification.
\newblock In {\em Proceedings of the IEEE international conference on computer
  vision},  1026--1034.

\bibitem[\protect\citeauthoryear{He \bgroup et al\mbox.\egroup
  }{2019}]{he2019task}
He, X.; Sygnowski, J.; Galashov, A.; Rusu, A.~A.; Teh, Y.~W.; and Pascanu, R.
\newblock 2019.
\newblock Task agnostic continual learning via meta learning.
\newblock {\em arXiv preprint arXiv:1906.05201}.

\bibitem[\protect\citeauthoryear{Kendall, Gal, and
  Cipolla}{2018}]{Kendall2018Multi}
Kendall, A.; Gal, Y.; and Cipolla, R.
\newblock 2018.
\newblock Multi-task learning using uncertainty to weigh losses for scene
  geometry and semantics.
\newblock In {\em Proceedings of the IEEE Conference on Computer Vision and
  Pattern Recognition},  7482--7491.

\bibitem[\protect\citeauthoryear{Kurzweil}{2005}]{Kurzweil2005THE}
Kurzweil, R.
\newblock 2005.
\newblock The singularity is near: When humans transcend biology.
\newblock {\em Cryonics} 85(1):160--160.

\bibitem[\protect\citeauthoryear{Laird and Wray~III}{2010}]{Laird2010Cognitive}
Laird, J.~E., and Wray~III, R.~E.
\newblock 2010.
\newblock Cognitive architecture requirements for achieving agi.
\newblock In {\em 3d Conference on Artificial General Intelligence (AGI-2010)}.
\newblock Atlantis Press.

\bibitem[\protect\citeauthoryear{Mnih \bgroup et al\mbox.\egroup
  }{2015}]{mnih2015human}
Mnih, V.; Kavukcuoglu, K.; Silver, D.; Rusu, A.~A.; Veness, J.; Bellemare,
  M.~G.; Graves, A.; Riedmiller, M.; Fidjeland, A.~K.; Ostrovski, G.; et~al.
\newblock 2015.
\newblock Human-level control through deep reinforcement learning.
\newblock {\em Nature} 518(7540):529.

\bibitem[\protect\citeauthoryear{Pentina and
  Lampert}{2015}]{pentina2015lifelong}
Pentina, A., and Lampert, C.~H.
\newblock 2015.
\newblock Lifelong learning with non-iid tasks.
\newblock In {\em Advances in Neural Information Processing Systems},
  1540--1548.

\bibitem[\protect\citeauthoryear{Ren \bgroup et al\mbox.\egroup
  }{2018}]{Ren2018Meta}
Ren, M.; Triantafillou, E.; Ravi, S.; Snell, J.; Swersky, K.; Tenenbaum, J.~B.;
  Larochelle, H.; and Zemel, R.~S.
\newblock 2018.
\newblock Meta-learning for semi-supervised few-shot classification.
\newblock {\em arXiv preprint arXiv:1803.00676}.

\bibitem[\protect\citeauthoryear{Santoro \bgroup et al\mbox.\egroup
  }{2016}]{Santoro2016One}
Santoro, A.; Bartunov, S.; Botvinick, M.; Wierstra, D.; and Lillicrap, T.
\newblock 2016.
\newblock One-shot learning with memory-augmented neural networks.
\newblock {\em arXiv preprint arXiv:1605.06065}.

\bibitem[\protect\citeauthoryear{Schwarz \bgroup et al\mbox.\egroup
  }{2018}]{schwarz2018progress}
Schwarz, J.; Luketina, J.; Czarnecki, W.~M.; Grabska-Barwinska, A.; Teh, Y.~W.;
  Pascanu, R.; and Hadsell, R.
\newblock 2018.
\newblock Progress \& compress: A scalable framework for continual learning.
\newblock {\em arXiv preprint arXiv:1805.06370}.

\bibitem[\protect\citeauthoryear{Silver \bgroup et al\mbox.\egroup
  }{2016}]{silver2016mastering}
Silver, D.; Huang, A.; Maddison, C.~J.; Guez, A.; Sifre, L.; Van Den~Driessche,
  G.; Schrittwieser, J.; Antonoglou, I.; Panneershelvam, V.; Lanctot, M.;
  et~al.
\newblock 2016.
\newblock Mastering the game of go with deep neural networks and tree search.
\newblock {\em nature} 529(7587):484.

\bibitem[\protect\citeauthoryear{Steinwart and
  Christmann}{2009}]{steinwart2009fast}
Steinwart, I., and Christmann, A.
\newblock 2009.
\newblock Fast learning from non-iid observations.
\newblock In {\em Advances in neural information processing systems},
  1768--1776.

\bibitem[\protect\citeauthoryear{Sung \bgroup et al\mbox.\egroup
  }{2017}]{sung2017learning}
Sung, F.; Zhang, L.; Xiang, T.; Hospedales, T.; and Yang, Y.
\newblock 2017.
\newblock Learning to learn: Meta-critic networks for sample efficient
  learning.
\newblock {\em arXiv preprint arXiv:1706.09529}.

\bibitem[\protect\citeauthoryear{Vapnik}{2003}]{Vapnik2003Statistical}
Vapnik, V.~N.
\newblock 2003.
\newblock Statistical learning theory.
\newblock {\em Annals of the Institute of Statistical Mathematics}
  55(2):371--389.

\bibitem[\protect\citeauthoryear{Yu}{1994}]{Yu1994Rates}
Yu, B.
\newblock 1994.
\newblock Rates of convergence for empirical processes of stationary mixing
  sequences.
\newblock {\em Annals of Probability} 22(1):94--116.

\bibitem[\protect\citeauthoryear{Zenke, Poole, and
  Ganguli}{2017}]{Zenke2017Continual}
Zenke, F.; Poole, B.; and Ganguli, S.
\newblock 2017.
\newblock Continual learning through synaptic intelligence.
\newblock In {\em Proceedings of the 34th International Conference on Machine
  Learning-Volume 70},  3987--3995.
\newblock JMLR. org.

\end{thebibliography}
\clearpage

\begin{center}
	\textbf{\LARGE Supplementary Material}
\end{center}

Due to the limitations of the length of the paper, we put the proof of theorems and some details of discussion in this supplementary Material.

\section{Theorem Proof in \\ The Framework of Subjectivity Learning}

In the Framework of Subjectivity Learning, we consider the general learning scenario and give the form of traditional machine learning and subjectivity learning. The learning system is given a set  of  input-label pairs $(x_i, y_i)$. In the statistical learning theory, it was assumed that all samples are independent and identical distributed. It looks for a function $y=f(x)$ (or $F(y|x)$)by minimizing the risk function. We remark samples as $z=(x,y)\in Z$ and remark $y=f(x)$ (or $F(y|x)$) as $g((x,y)) = g(z)\in G$. The traditional risk minimization can be written as 
\begin{align}
\inf_{g} R_t(g(z))=\int L(z,g(z)) \dif F(z)
\label{TRM}
\end{align} 
where $L(z,g(z))$ is the loss function of sample $z$ and function $g(z) \in G$ is on the function space $Z\rightarrow \mathbb{R}$.

Note that the sample pairs of real data do not meet the independent and identical distributed. They may come from multiple independent distributions (e.g, the mapping $f(x_i)=y_{i,1}$ and $f(x_i)=y_{i,2}$ are both right with probability 1 but $y_{i,1}\neq y_{i,2}$). It is obviously wrong to directly estimate one posterior probability $p(y|x)$ since it does not satisfy the normalization condition that $\sum_{y} p(y|x) \neq 1$. Even the mapping from $x$ to $y$ can not be expressed as a function.

To describe this complex data, we introduce the concept of subjectivity. The data are subjectively divided into multiple subjects $\tau$ and construct a joint distribution $F(z,\tau)$. Under a specific subject $\tau$, the input $x$ contains a unique output $y$ and it could be expressed as a function $y=f(x,\tau)$ or $F(y|x, \tau)$. At  the same time, we add a new variable, which is the subject attribution of the sample $p(t|z)$. This framework is named subjectivity learning. The goal of subjectivity learning is to learning the variables $g(z,\tau)=F(y|x,\tau)$ and $h(z, \tau)= p(\tau|z)/p(\tau)$. As shown in the paper, we construct the \emph{global risk functional} that:
\begin{align}
\inf_{g,h} R_s(g,h)=\int L_0(z,\tau,g(z,\tau))h(z,\tau) \dif F(z) \dif F(\tau)
\label{GRM}
\end{align}
as the learning goal of subjectivity learning.

\subsection{Proof of Theorem 1}

We compare the traditional risk of statistical learning problem to the global risk of subjectivity learning. The Theorem 1 demonstrates that the minimization of global risk results in a lower optimal risk under the equivalent loss measure, also there exists a positive risk gap.
\begin{theorem}
	Let us consider the problem of machine learning (\ref{TRM}) and the problem of subjectivity learning (\ref{GRM}). Under the equivalent loss measure $L(z,g(z))|\tau = L_0(z,\tau,g(z))$, the inequality
	\begin{align}
	\inf_{g_1} R_{t}(g_1(z)) \geqslant \inf_{g_2,h} R_{s} (g_2,h)
	\label{th1 p1}
	\end{align}
	take place. 
	
	For the optimal solution $g^*(z,\tau)$ and $h^*(z,\tau)$, if there exists samples measured with $\dif F(z,\tau_1),\dif F(z,\tau_2)>0$ such that $g^*(z,\tau_1) \neq g^*(z,\tau_2)$, there exist an absolute risk gap that
	\begin{align}
	\Delta R^* = \inf_{g} R_t(g(z)) - \inf_{g,h} R_s(g,h) >0
	\label{th1 p2}
	\end{align}
\end{theorem}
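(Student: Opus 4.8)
The plan is to realise the traditional learning problem as the special case of the subjectivity problem in which the predictor is forbidden to depend on $\tau$, and then to measure exactly how much is lost by this restriction. First I would record the one fact that drives everything: any admissible weight $h(z,\tau)=p(\tau|z)/p(\tau)$ satisfies $\int h(z,\tau)\dif F(\tau)=\int p(\tau|z)\dif\tau=1$ for $F$-a.e.\ $z$. Combining this with the equivalence of loss measures $L(z,g_1(z))|\tau=L_0(z,\tau,g_1(z))$, for any candidate $g_1$ and any admissible $h^*$ one gets
\begin{align*}
R_t(g_1)=\int L(z,g_1(z))\dif F(z)=\iint L_0(z,\tau,g_1(z))\,h^*(z,\tau)\dif F(\tau)\dif F(z),
\end{align*}
i.e.\ $R_t(g_1)=R_s(g_1,h^*)$ when $g_1$ is read as a ($\tau$-independent) element of the subjectivity class. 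Taking infima over the larger class $(g_2,h)$ can only decrease the value, which already yields inequality (\ref{th1 p1}).

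Next I would give a pointwise description of the two optima. For a fixed $h$, the optimal $g$ minimises the integrand $L_0(z,\tau,\cdot)$ pointwise on $\{h(z,\tau)>0\}$, so on the support of the optimal $h^*$ the optimal $g^*$ satisfies $g^*(z,\tau)=\arg\min_{c}L_0(z,\tau,c)$. Writing $m(z,\tau):=\min_{c}L_0(z,\tau,c)$, this gives
\begin{align*}
\inf_{g_2,h}R_s(g_2,h)=\int\psi(z)\dif F(z),\qquad \psi(z):=\int m(z,\tau)\,h^*(z,\tau)\dif F(\tau).
\end{align*}
On the traditional side, minimising $R_t$ amounts to choosing, for $F$-a.e.\ $z$, the best constant value $c=g_1(z)$, so
\begin{align*}
\inf_{g_1}R_t(g_1)=\int\phi(z)\dif F(z),\qquad \phi(z):=\inf_{c}\int L_0(z,\tau,c)\,h^*(z,\tau)\dif F(\tau),
\end{align*}
where $\inf_{g_1}R_t\ge\int\phi\,\dif F$ always holds by exchanging the infimum with the integral over $z$, with equality under a mild measurable-selection hypothesis on $L_0$.

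Comparing the two displays, pulling $\inf_c$ inside the integral over $\tau$ gives $\phi(z)\ge\psi(z)$ for every $z$, hence $\Delta R^*=\int(\phi(z)-\psi(z))\dif F(z)\ge0$. For strictness I would use the hypothesis of the theorem: from the positive-measure set on which $g^*(z,\tau_1)\ne g^*(z,\tau_2)$ one extracts, via Fubini, a set $S$ of positive $F_z$-measure on which $\tau\mapsto\arg\min_{c}L_0(z,\tau,c)$ is not $h^*(z,\cdot)\dif F(\cdot)$-a.e.\ constant. Since (strict convexity of $L_0$ in its last argument, which is what makes the hypothesis $g^*(z,\tau_1)\ne g^*(z,\tau_2)$ meaningful) no single constant $c$ can be the minimiser of both $L_0(z,\tau_1,\cdot)$ and $L_0(z,\tau_2,\cdot)$, a strict Jensen-type inequality gives $\phi(z)>\psi(z)$ for $z\in S$. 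Integrating over $S$ then produces $\Delta R^*\ge\int_{S}(\phi-\psi)\dif F>0$, which is (\ref{th1 p2}).

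The main obstacle is the passage from $\phi(z)\ge\psi(z)$ to a \emph{strict} inequality on a set of positive measure: this genuinely requires an extra regularity assumption on $L_0$ (uniqueness of the minimiser in its output argument), without which the hypothesis $g^*(z,\tau_1)\ne g^*(z,\tau_2)$ could be vacuous and the gap could be zero. A secondary technical point is the attainment/measurability issues in the second step — justifying that the subjectivity optimum is a pointwise loss-minimiser and that $\inf_{g_1}R_t=\int\phi\,\dif F$ — which I would handle with a standard measurable-selection argument. Everything else is bookkeeping with Fubini and the normalisation $\int h\,\dif F(\tau)=1$.
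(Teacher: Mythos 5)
Your proposal is correct and follows essentially the same route as the paper: part (1) is obtained by embedding the $\tau$-independent predictors as a subclass of the subjectivity class (the paper via the conditional decomposition $R_t(g_1)=\iint L_0(z,\tau,g_1(z))\hat h(z,\tau)\dif F(z)\dif F(\tau)$, you via the equivalent normalization $\int h(z,\tau)\dif F(\tau)=1$), and part (2) by observing that the single value $g_1^*(z)$ cannot be simultaneously optimal for two subjects with distinct optima, hence incurs a strict excess loss on a set of positive measure. The only divergence is the auxiliary hypothesis used to force strictness — you assume uniqueness of the pointwise minimizer of $L_0(z,\tau,\cdot)$, while the paper assumes injectivity of $c\mapsto L_0(z,\tau,c)$ together with pointwise optimality of $g^*$; neither appears in the theorem statement, and you are right to flag that without some such regularity the gap could collapse to zero.
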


\begin{proof}
	
	The theorem contains two parts. We firstly qualitatively state that the global risk functional drives a lower risk, and then give the proof of the positive risk gap.
	
	We consider a set of samples $z_1,...,z_l,...$ are from the distribution $F(z)$. The traditional risk minimization is defined as 
	\begin{align}
	\inf_{g_1(z)\in G_1} R_t(g_1(z))=\int L(z,g_1(z)) \dif F(z)
	\label{th0 proof eq1}
	\end{align}
	where $G_1 = Z\rightarrow \mathbb{R}$.
	For comparison to the global risk minimization of subjectivity learning, we first consider a certain data-subjects division $\hat{h}(z,\tau)$. We expand the risk function (\ref{th0 proof eq1}) under this joint distribution that:
	\begin{align*}
	& \inf_{g_1(z)\in G_1} R_t(g_1(z)) \\
	=&\inf_{g_1(z)\in G_1} \int [R_t(g_1(z))\mid \tau] \cdot \dif F(\tau) \\
	=&\inf_{g_1(z)\in G_1} \iint [L(z,g_1(z))\mid \tau] \cdot \dif F(z|\tau) \dif F(\tau).
	\end{align*}
	By the condition $L(z,g(z))|\tau = L_0(z,\tau,g(z))$ and $h(z,\tau)=p(\tau|z)/p(\tau)$, the above risk minimization can be expressed as 
	\begin{align}
	\begin{split}
	& \inf_{g_1(z)\in G_1} R_t(g_1(z))  \\
	=&\inf_{g_1(z)\in G_1} \iint L_0(z,\tau,g(z))\hat{h}(z,\tau) \dif F(z) \dif F(\tau)
	\end{split} 
	\end{align}
	We then extend the function $g_1(z)$ from the space $Z\rightarrow \mathbb{R}$ to the space $Z \times T \rightarrow \mathbb{R}$. We construct the function $g'_1(z,\tau) \in Z\times T \rightarrow \mathbb{R}$ such that $g'_1(z,\tau)=g_1(z)$ holds for all $\tau,z$. Then the traditional risk minimization can be expressed as
	\begin{align}
	\begin{split}
	& \inf_{g_1(z)\in G_1} R_t(g_1(z))  \\
	=&\inf_{g'_1(z)\in G'_1} \iint L_0(z,\tau,g'(z,\tau))\hat{h}(z,\tau) \dif F(z) \dif F(\tau)
	\end{split}
	\label{th0 proof eq2}
	\end{align}
	where $G'_1=\left \{g: g \in Z\times T \rightarrow \mathbb{R}, g(z,\tau)=\bar{g}(z)\  for\  \forall \tau  \right \}$.
	
	On the other hand, the global risk minimization in subjectivity learning is defined as
	\begin{align}
	\begin{split}
	&\inf_{g_2\in G_2,h\in H} R_s(g_2(z,\tau),h(z,\tau)) \\
	=&\inf_{g_2\in G_2,h\in H} \int L_0(z,\tau,g_2(z,\tau))h(z,\tau) \dif F(z) \dif F(\tau)
	\end{split}
	\end{align}
	where $G_2 = \{g: g \in Z\times T \rightarrow \mathbb{R}  \} $.
	When the data-subjects relation is determined by $\hat{h}(z,t)$, the global risk is formed as
	\begin{align}
	\begin{split}
	&\inf_{g_2\in G_2} R'_s(g_2(z,\tau)) \\
	=&\inf_{g_2\in G_2} \int L_0(z,\tau,g_2(z,\tau))\hat{h}(z,\tau) \dif F(z) \dif F(\tau)
	\label{th0 proof eq3}
	\end{split}
	\end{align}
	Since  $G'_1 \subseteq G_2$, compared with (\ref{th0 proof eq2}) and (\ref{th0 proof eq3}), we have that
	\begin{align}
	\inf_{g_1(z)\in G_1} R_t(g_1(z)) \geqslant \inf_{g_2\in G_2} R'_s(g_2(z,\tau))
	\end{align}
	take place for any division $\hat{h}(z,\tau)$.
	Also, $h(z,\tau)$ is a variable of the global risk functional that
	\begin{align}
	\inf_{g_2\in G_2} R'_s(g_2(z,\tau)) \geqslant \inf_{g_2\in G_2,h\in H} R_s(g_2(z,\tau),h(z,\tau)),
	\end{align}
	so we get the inequality
	\begin{align}
	\inf_{g_1(z)\in G_1} R_t(g_1(z)) \geqslant \inf_{g_2\in G_2,h\in H} R_s(g_2(z,t),h(z,t)).
	\end{align}
	The first  part  of  the theorem has been proved, which qualitatively shows the problem of global risk drives a lower risk bound.
	
	Then we consider a more realistic case. Let the optimal solution of global risk functional be $g^*(z,\tau)$ and $h^*(z,\tau)$. There should exist samples measured with $\dif F(z,\tau_1),\dif F(z,\tau_2)>0$ such that 
	\begin{align}
	g^*(z,\tau_1) \neq g^*(z,\tau_2),
	\end{align}
	which corresponds to the multi-label case of data in general intelligence problem. Generally, the loss function holds $L_0(z,\tau,g_1(z,\tau)) \neq L_0(z,\tau,g_2(z,\tau))$ when $g_1(z,\tau) \neq g_2(z,\tau)$.
	
	Under the optimal data-subject distribution $h^*(z,\tau)$, we expand the traditional risk functional by subject  $\tau$ that
	\begin{align*}
	\inf_{g_1} R_t(g_1)&=\inf_{g_1} \int L(z,g_1(z)) \dif F(z) \\
	&=\inf_{g_1} \int L(z,\tau,g_1(z))|t \cdot h^*(z,\tau) \dif F(z) \dif F(\tau) \\
	&=\inf_{g_1} \int L_0(z,\tau,g_1(z)) h^*(z,\tau) \dif F(z) \dif F(\tau)
	\end{align*}
	Let the optimal solution of traditional risk functional be $g_1^*(z)$. Then we have
	\begin{align}
	\inf_{g_1} R_t(g_1) = \int L_0(z,\tau,g_1^*(z)) h^*(z,\tau) \dif F(z) \dif F(\tau)
	\end{align}
	
	One the other hand, we consider the lower bound of global risk functional that
	\begin{align}
	\inf_{g,h} R_s(g,h)=\int L_0(z,\tau,g^*(z,\tau))h^*(z,\tau) \dif F(z) \dif F(\tau).
	\end{align}
	Since there exist samples $z,\tau_1$ and $z,\tau_2$ measured with $\dif F(z,\tau_1), \dif F(z,\tau_2)>0$ such that
	\begin{align}
	g^*(z,\tau_1) \neq g^*(z,\tau_2),
	\end{align}
	there must have
	\begin{align}
	g^*(z,\tau_1) \neq g_1^*(z) \ \ or\ \  g^*(z,\tau_2) \neq g_1^*(z).
	\end{align}
	Without generality, suppose that $g^*(z,\tau_1) \neq g_1^*(z)$. Since the loss function have
	\begin{align}
	L_0(z,\tau,g_1(z,\tau)) \neq L_0(z,\tau,g_2(z,\tau))
	\end{align}
	when $g_1(z,\tau) \neq g_2(z,\tau)$, we get
	\begin{align}
	L_0(z,\tau_1,g^*(z,\tau_1)) \neq L_0(z,\tau_1,g_1^*(z)).
	\end{align}
	And, because $g^*(z,\tau)$ is the optimal description for obtaining the lower bound, the inequality
	\begin{align}
	L_0(z,\tau,g^*(z,\tau)) \leq L_0(z,\tau,g_1^*(z))
	\end{align}
	holds for any $z,\tau$. If the inequality is not satisfied, obviously we can construct a new optimal solution $g^{**}$ such that $g^{**}(z,\tau)=g^*_1(z)$ on the interval where inequality dose not hold and $g^{**}(z,\tau)=g^*(z,\tau)$ on the other interval. 
	
	Therefore, we have
	\begin{align}
	L_0(z,\tau_1,g^*(z,\tau_1)) < L_0(z,\tau_1,g_1^*(z))
	\end{align} 
	for sample $z,\tau_1$ with $\dif F(z,\tau_1) > 0$.
	Then the risk gap between lower bound of traditional risk functional and global risk functional satisfies that:
	\begin{align*}
	\Delta R^* &= \inf_{g_1} R_t(g_1(z)) - \inf_{g,h} R_s(g,h) \\
	\begin{split}
	&=\int L_0(z,\tau,g^*_1(z)) h^*(z,\tau) \dif F(z) \dif F(t) \\
	&\qquad- \int L_0(z,\tau,g^*(z,\tau))h^*(z,\tau) \dif F(z) \dif F(\tau) 
	\end{split} \\
	\begin{split}
	&=\int [L_0(z,\tau,g^*_1(z)) \\
	&\qquad \quad-L_0(z,\tau,g^*(z,\tau))] h^*(z,\tau) \dif F(z) \dif F(\tau) 
	\end{split}\\
	&=\int [L_0(z,\tau,g^*_1(z))-L_0(z,\tau,g^*(z,\tau))] \dif F(z,\tau) \\
	&>0.
	\end{align*}
	The theorem is proved.
\end{proof}

\section{Theorem Proof in \\ Convergency with Two Coupled Variables}
For analysis the consistency of the principle of empirical global risk minimization, we first generalize the Law of Large Number to the case of two coupled number. We use the same notation of the global risk and the empirical risk function in the paper that
\begin{align}
R(\alpha) = \iint Q(z,\tau,\alpha) \dif F(z)\dif F(\tau)
\end{align}
and
\begin{align}
R_{emp}(\alpha, m, l) = \frac{1}{m} \sum_{j=1}^{m} {\frac{1}{l} \sum_{i=1}^{l} Q(z_i,\tau_j,\alpha)}.
\end{align}
We propose the theorem that
\begin{theorem}
	(\textbf{Convergency Theorem with two coupled variables.}) When the $\alpha^*$ is determined, for the function $Q(z,\tau,\alpha^*)$ and any $\varepsilon > 0$, the following convergence
	\begin{align}
	P\Big \{ R(\alpha^*)-R_{emp} (\alpha^*, m, l) >\varepsilon  \Big \} \xrightarrow[\left < m,l \right >\rightarrow \infty]{} 0
	\label{eq lemma 1}
	\end{align}
	take place, where the sample numbers $\left <m,l \right >$ satisfied the rule:
	\begin{align}
	l > \frac{2(B_z - A_z)^2}{\varepsilon ^2} \ln m + \frac{(B_z - A_z)^2}{(B_\tau - A_\tau)^2} m.
	\label{sample relation}
	\end{align}
	The $B_z,A_z$ and $B_\tau, A_\tau$ are respectively the bound of functions that $A_z \leq Q(z,\tau) \leq B_z$ and $A_\tau \leq R^{lo}(\alpha, \tau) \leq B_\tau$.
	\label{lemma 1}
\end{theorem}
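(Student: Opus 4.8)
The plan is to insert the local risks $R^{lo}(\tau_j,\alpha^*)=\int Q(z,\tau_j,\alpha^*)\dif F(z)$ as an intermediate layer and thereby split the joint deviation into a ``subject-level'' part and a ``data-level'' part, each of which is a one-dimensional empirical-mean deviation to which a Hoeffding-type inequality applies. Writing $\hat R^{lo}(\tau_j)=\tfrac1l\sum_{i=1}^l Q(z_i,\tau_j,\alpha^*)$, I would start from the telescoping identity
\begin{align*}
R(\alpha^*)-R_{emp}(\alpha^*,m,l)&=\Big(R(\alpha^*)-\tfrac1m\sum_{j=1}^m R^{lo}(\tau_j,\alpha^*)\Big) \\
&\quad +\tfrac1m\sum_{j=1}^m\Big(R^{lo}(\tau_j,\alpha^*)-\hat R^{lo}(\tau_j)\Big)=:T_1+T_2.
\end{align*}
Since $F(z)$ and $F(\tau)$ are independent, interchanging integrals gives $\mathbb{E}_\tau R^{lo}(\tau,\alpha^*)=R(\alpha^*)$ and $\mathbb{E}_z Q(z,\tau,\alpha^*)=R^{lo}(\tau,\alpha^*)$, so $T_1$ is exactly the deviation of the average of $m$ i.i.d.\ variables $R^{lo}(\tau_j,\alpha^*)\in[A_\tau,B_\tau]$ from their common mean, and, conditionally on $\tau_j$, each summand of $T_2$ is the deviation of the average of $l$ i.i.d.\ variables $Q(z_i,\tau_j,\alpha^*)\in[A_z,B_z]$ from their mean.

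For $T_1$ I would apply Hoeffding's inequality directly to obtain $P\{T_1>\varepsilon/2\}\le\exp\!\big(-m\varepsilon^2/(2(B_\tau-A_\tau)^2)\big)$. The delicate point in $T_2$ --- and the reason the sampling rule is needed --- is that the \emph{same} data sample $z_1,\dots,z_l$ is reused under every subject $\tau_j$, so the inner averages $\hat R^{lo}(\tau_j)$ are \emph{not} independent across $j$ and Hoeffding cannot be applied to $T_2$ as a whole. I would instead bound $T_2\le\max_{1\le j\le m}\big(R^{lo}(\tau_j,\alpha^*)-\hat R^{lo}(\tau_j)\big)$, apply Hoeffding conditionally on each $\tau_j$ (the resulting bound $\exp(-l\varepsilon^2/(2(B_z-A_z)^2))$ is uniform in the value of $\tau_j$, hence survives taking expectation over $\tau_j$), and then union-bound over $j=1,\dots,m$. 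Combining via $\{R(\alpha^*)-R_{emp}>\varepsilon\}\subseteq\{T_1>\varepsilon/2\}\cup\{T_2>\varepsilon/2\}$ yields
\begin{align*}
P\big\{R(\alpha^*)-R_{emp}(\alpha^*,m,l)>\varepsilon\big\}\le\exp\!\Big(-\frac{m\varepsilon^2}{2(B_\tau-A_\tau)^2}\Big)+m\exp\!\Big(-\frac{l\varepsilon^2}{2(B_z-A_z)^2}\Big).
\end{align*}

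Finally, the stated growth rule $l>\frac{2(B_z-A_z)^2}{\varepsilon^2}\ln m+\frac{(B_z-A_z)^2}{(B_\tau-A_\tau)^2}m$ is exactly what makes $\frac{l\varepsilon^2}{2(B_z-A_z)^2}>\ln m+\frac{\varepsilon^2 m}{2(B_\tau-A_\tau)^2}$, so the factor $m$ in front of the second exponential is absorbed and the right-hand side is at most $2\exp\!\big(-\varepsilon^2 m/(2(B_\tau-A_\tau)^2)\big)$, which tends to $0$ as $\langle m,l\rangle\to\infty$ (under this rule $m\to\infty$ forces $l\to\infty$ as well, which is what the notation encodes). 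I expect the main obstacle not to be the concentration estimates, which are routine, but the dependency bookkeeping in $T_2$: one must condition on $\tau_1,\dots,\tau_m$ correctly, respect that the data sample is shared across subjects rather than resampled, and verify that the chosen growth rate is precisely the one that collapses the union-bound factor $m$ --- this is where the exact form of the sampling rule comes from, and it also explains why a linear-in-$m$ term, rather than merely $l\gtrsim\ln m$, is required so that the overall convergence rate ends up governed by $m$.
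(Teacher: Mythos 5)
Your proposal is correct and follows essentially the same route as the paper's proof: the same insertion of the local risks $R^{lo}(\tau_j,\alpha^*)$ as an intermediate layer, the same split via $P\{x_1+x_2>\varepsilon\}\le P\{x_1>\varepsilon/2\}+P\{x_2>\varepsilon/2\}$, the same union bound over the $m$ subjects (the paper packages this as a small lemma on averages), the same two Hoeffding bounds, and the same derivation of the sampling rule from requiring $m\exp(-l\varepsilon^2/(2(B_z-A_z)^2))\le\exp(-m\varepsilon^2/(2(B_\tau-A_\tau)^2))$. Your explicit remark about conditioning on the $\tau_j$ and the shared data sample is, if anything, slightly more careful than the paper's write-up.
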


Before the proof, we first introduce two basic inequality of probability. The first one is
\begin{lemma}
	The inequality
	\begin{align}
	P\left \{x_1+x_2 > \varepsilon\right \} \leq P\left \{x_1>\frac{\varepsilon}{2}\right \} + P\left \{x_2>\frac{\varepsilon}{2}\right \}
	\end{align}
	holds true.
	\label{lemma111}
\end{lemma}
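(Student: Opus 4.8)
The plan is to prove the stated inequality by a purely measure-theoretic argument: an elementary event inclusion followed by Boole's inequality, with no probabilistic independence or integrability assumed. First I would record the deterministic observation that if both $x_1 \leq \varepsilon/2$ and $x_2 \leq \varepsilon/2$, then $x_1 + x_2 \leq \varepsilon$, so the sum cannot exceed $\varepsilon$ strictly. Taking the contrapositive, on any sample point at which $x_1 + x_2 > \varepsilon$, at least one of the inequalities $x_1 > \varepsilon/2$ or $x_2 > \varepsilon/2$ must hold.

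Next I would translate this into an inclusion between events in the underlying probability space, namely $\{x_1 + x_2 > \varepsilon\} \subseteq \{x_1 > \varepsilon/2\} \cup \{x_2 > \varepsilon/2\}$. Monotonicity of the probability measure then gives $P\{x_1 + x_2 > \varepsilon\} \leq P\big(\{x_1 > \varepsilon/2\} \cup \{x_2 > \varepsilon/2\}\big)$, and finite subadditivity (the union bound) gives $P\big(\{x_1 > \varepsilon/2\} \cup \{x_2 > \varepsilon/2\}\big) \leq P\{x_1 > \varepsilon/2\} + P\{x_2 > \varepsilon/2\}$. Chaining these two inequalities yields exactly the claim.

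There is no real obstacle here; the only points requiring a word of care are bookkeeping ones: keeping the strict inequality on the left consistent with the ``at least one strict inequality'' on the right (so that the split constant is $\varepsilon/2$ and not something larger), and noting that $x_1, x_2$ are genuine random variables so the relevant sets are measurable — both of which are part of the ambient setup. Since the argument never uses independence, the bound holds for arbitrary $x_1, x_2$, which is precisely the generality needed when the lemma is later invoked with $x_1, x_2$ taken as the subject-risk and data-risk deviation terms appearing in the decomposition of $\xi^{\left< m,l \right>}$.
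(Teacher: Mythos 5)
Your proof is correct, but it follows a different route from the paper's. You argue by event inclusion and the union bound: the deterministic implication ``$x_1+x_2>\varepsilon$ forces $x_1>\varepsilon/2$ or $x_2>\varepsilon/2$'' gives $\{x_1+x_2>\varepsilon\}\subseteq\{x_1>\varepsilon/2\}\cup\{x_2>\varepsilon/2\}$, and monotonicity plus finite subadditivity finish the job. The paper instead decomposes $P\{x_1+x_2>\varepsilon\}$ by the law of total probability, conditioning on whether $x_1>\varepsilon/2$ or $x_1\le\varepsilon/2$, bounding the first conditional probability by $1$ and observing that on the complementary event the sum can only exceed $\varepsilon$ if $x_2>\varepsilon/2$. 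The two arguments encode the same underlying fact, but yours is slightly cleaner: it never introduces conditional probabilities, so it sidesteps the degenerate case where the conditioning event has probability zero (which the paper's write-up glosses over), and it makes explicit that no independence or integrability is used. Your approach also extends to the $n$-variable generalization (the paper's Lemma 2) in one line via the union bound over $n$ events, whereas the paper proves that by induction on the conditioning argument; both are fine, but the inclusion-based route is the more economical of the two.
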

\begin{proof}
	\begin{align*}
	&P\left \{x_1+x_2 > \varepsilon \right \} \\
	\begin{split}
	= & 	P\left \{x_1+x_2 > \varepsilon \mid x_1 > \frac{\varepsilon}{2} \right \}P\left \{ x_1 > \frac{\varepsilon}{2} \right \}   \\
	& + P\left \{x_1+x_2 > \varepsilon \mid x_1 \leq \frac{\varepsilon}{2} \right \}P\left \{ x_1 \leq \frac{\varepsilon}{2} \right \}
	\end{split} \\
	\begin{split}
	\leq & 	P\left \{x_1+x_2 > \varepsilon \mid x_1 > \frac{\varepsilon}{2} \right \}P\left \{ x_1 > \frac{\varepsilon}{2} \right \}   \\
	& + P\left \{ x_2 > \frac{\varepsilon}{2} \mid x_1 \leq \frac{\varepsilon}{2} \right \}P\left \{ x_1 \leq \frac{\varepsilon}{2} \right \}
	\end{split} \\
	\leq &   P\left \{ x_1 > \frac{\varepsilon}{2} \right \} + P\left \{ x_2 > \frac{\varepsilon}{2}  \right \}
	\end{align*}
\end{proof}
The second lemma is the generalization of Lemma \ref{lemma111} for multi-variables that
\begin{lemma}
	The inequality
	\begin{align}
	P\left \{\frac{1}{n} \sum_{i=1}^{n} x_i > \varepsilon\right \} \leq \sum_{i=1}^{n}  P\left \{x_i> \varepsilon \right \} 
	\end{align}
	holds true.
	\label{lemma222}
\end{lemma}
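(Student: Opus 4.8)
The plan is to reduce the claim to Boole's inequality by way of a purely deterministic set inclusion. First I would note that for any realization of the random variables $x_1,\dots,x_n$, if $x_i \leq \varepsilon$ holds simultaneously for every $i=1,\dots,n$, then averaging gives $\frac{1}{n}\sum_{i=1}^{n} x_i \leq \varepsilon$ as well. Taking contrapositives at the level of events, this says precisely that
\begin{align}
\left\{ \frac{1}{n}\sum_{i=1}^{n} x_i > \varepsilon \right\} \subseteq \bigcup_{i=1}^{n} \{ x_i > \varepsilon \}.
\end{align}

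Then I would apply monotonicity of the probability measure followed by finite subadditivity (Boole's inequality), which yields
\begin{align}
P\left\{ \frac{1}{n}\sum_{i=1}^{n} x_i > \varepsilon \right\} \leq P\left( \bigcup_{i=1}^{n} \{ x_i > \varepsilon \} \right) \leq \sum_{i=1}^{n} P\{ x_i > \varepsilon \},
\end{align}
which is exactly the asserted inequality. Alternatively, to stay closer in spirit to the proof of Lemma \ref{lemma111}, one can argue by induction on $n$: the case $n=1$ is trivial, and for the inductive step one uses the inclusion $\{\sum_{i=1}^{n} x_i > n\varepsilon\} \subseteq \{\sum_{i=1}^{n-1} x_i > (n-1)\varepsilon\} \cup \{ x_n > \varepsilon \}$ together with the two-variable bound and the induction hypothesis.

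The main (and only) obstacle here is bookkeeping rather than analysis: one must be careful that the ``average exceeds $\varepsilon$ implies some summand exceeds $\varepsilon$'' step is phrased for the arithmetic mean against the fixed threshold $\varepsilon$, rather than for a sum against a rescaled threshold as in Lemma \ref{lemma111}, and that the passage from the event inclusion to the probability bound invokes only monotonicity and finite subadditivity of $P$. The statement is ultimately just a convenient repackaging of the union bound for use in the subsequent analysis of the empirical global risk process.
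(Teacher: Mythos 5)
Your proof is correct. The set inclusion $\{\frac{1}{n}\sum_{i=1}^{n} x_i > \varepsilon\} \subseteq \bigcup_{i=1}^{n}\{x_i > \varepsilon\}$ is exactly right (if every $x_i \leq \varepsilon$ then the average is at most $\varepsilon$), and monotonicity plus finite subadditivity of $P$ finish the argument. The paper takes a different, more laborious route: it proves the lemma by induction on $n$, with an inductive step that conditions on the events $\{x_n > \varepsilon\}$ and $\{x_n \leq \varepsilon\}$ and uses the observation that on the latter event $\sum_{i=1}^{n}x_i > n\varepsilon$ forces $\sum_{i=1}^{n-1}x_i > (n-1)\varepsilon$ — in effect re-deriving Boole's inequality one term at a time, in parallel with the conditioning argument used for the two-variable Lemma \ref{lemma111}. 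Your direct argument is cleaner and makes transparent that the lemma is nothing more than the union bound applied to the events $\{x_i > \varepsilon\}$; the paper's induction buys only stylistic uniformity with its preceding lemma. Your sketched alternative inductive step is essentially the paper's proof, so you have both routes covered. One very minor caution: the pivot between the two phrasings of the event, $\{\frac{1}{n}\sum_i x_i > \varepsilon\}$ versus $\{\sum_i x_i > n\varepsilon\}$, is an exact equality of events, so nothing is lost there — you flag this bookkeeping point yourself, and it is handled correctly.
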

\begin{proof}
	Let us prove it by mathematical induction. For $n=1$, the inequality obviously holds true.
	For $n>1$, we assume that the inequality holds true for $n-1$ that
	\begin{align}
	P\left \{\frac{1}{n-1} \sum_{i=1}^{n-1} x_i > \varepsilon\right \} \leq \sum_{i=1}^{n-1}  P\left \{x_i> \varepsilon \right \}. 
	\end{align}
	Then we have
	\begin{align*}
	&P\left \{\frac{1}{n} \sum_{i=1}^{n} x_i > \varepsilon\right \}  \\
	=& P\left \{ x_n + \sum_{i=1}^{n-1} x_i > n \cdot \varepsilon \right \} \\
	\begin{split}
	= & 	P\left \{x_n + \sum_{i=1}^{n-1} x_i > n\varepsilon \mid x_n > \varepsilon \right \}P\left \{ x_n > \varepsilon \right \}   \\
	& + P\left \{x_n + \sum_{i=1}^{n-1} x_i > n\varepsilon \mid x_n \leq \varepsilon \right \}P\left \{ x_n \leq \varepsilon \right \}
	\end{split} \\
	\begin{split}
	\leq & P\left \{ x_n > \varepsilon \right \}P\left \{x_n + \sum_{i=1}^{n-1} x_i > n\varepsilon \mid x_n > \varepsilon \right \}	   \\
	& + P\left \{ \sum_{i=1}^{n-1} x_i > (n-1)\varepsilon \mid x_n \leq \varepsilon \right \}P\left \{ x_n \leq \varepsilon \right \}
	\end{split} \\
	\leq & P\left \{ x_n > \varepsilon \right \} + P\left \{ \sum_{i=1}^{n-1} x_i > (n-1)\varepsilon \right \}     \\
	\leq & \sum_{i=1}^{n}  P\left \{x_i> \varepsilon \right \} 
	\end{align*}
	Therefore, the lemma is proved.
\end{proof}

Now we prove the Theorem \ref{lemma 1}.
\begin{proof}[\textbf{Proof of Theorem \ref{lemma 1}}:]
	To prove the theorem, we rewrite the equation (\ref{eq lemma 1}) as that: For any $\varepsilon > 0, \epsilon > 0$, we can find a $\delta_1 > 0$ so that when $m^* > \delta_1$ and $l^* > \delta_2(m^*)$, the following inequality
	\begin{align}
	P\Big \{ R(\alpha^*)-R_{emp} (\alpha^*, m^*, l^*) >\varepsilon  \Big \} < \epsilon ,
	\label{lemma1 proof 1}
	\end{align}
	take place, where the local risk is defined as
	\begin{align}
	R^{lo} (\tau,\alpha)= \int Q(z,\tau,\alpha) \dif F(z). 
	\end{align}
	Then we consider left side of equation(\ref{lemma1 proof 1}) as 
	\begin{align}
	&P\Big\{   R(\alpha^*)-R_{emp} (\alpha^*, m^*, l^*)  >\varepsilon  \Big\} \\
	\begin{split}
	=& P\Big\{  R(\alpha^*) - \frac{1}{m} \sum_{j=1}^{m} R^{lo} (\tau_j,\alpha^*) \\
	&+ \frac{1}{m} \sum_{j=1}^{m} R^{lo} (\tau_j,\alpha^*) -  R_{emp} (\alpha^*, m^*, l^*) >\varepsilon  \Big\}
	\end{split} 
	\end{align}
	\begin{align}
	\begin{split}
	\ \  \leq & P\Big\{  R(\alpha^*) - \frac{1}{m} \sum_{j=1}^{m} R^{lo} (\tau_j,\alpha^*) > \frac{\varepsilon}{2} \Big\} \\
	&+P\Big\{  \frac{1}{m} \sum_{j=1}^{m} R^{lo} (\tau_j,\alpha^*)  \\
	&\qquad  \qquad \qquad \qquad -R_{emp} (\alpha^*, m^*, l^*)  >\frac{\varepsilon}{2}  \Big\}
	\label{lemma1 proof ineq 1}
	\end{split} 
	\end{align}
	\begin{align}
	\begin{split}
	\ \ = & P\Big\{   R(\alpha^*) - \frac{1}{m} \sum_{j=1}^{m} R^{lo} (\tau_j,\alpha^*) > \frac{\varepsilon}{2} \Big\} \\
	&+P\Big\{  \frac{1}{m} \sum_{j=1}^{m} \big ( R^{lo} (\tau_j,\alpha^*) \\
	&\qquad  \qquad \qquad - \frac{1}{l} \sum_{i=1}^{l} Q(z_i,\tau_j,\alpha^*) \big )  >\frac{\varepsilon}{2}  \Big\}
	\end{split} 
	\end{align}
	\begin{align}
	\begin{split}
	\ \ \leq & P\Big\{  R(\alpha^*) - \frac{1}{m} \sum_{j=1}^{m} R^{lo} (\tau_j,\alpha^*)  > \frac{\varepsilon}{2} \Big\} \\
	&+  P\Big\{ \frac{1}{m} \sum_{j=1}^{m} \Big [  R^{lo} (\tau_j,\alpha^*) \\
	&\qquad  \qquad \qquad  - \frac{1}{l} \sum_{i=1}^{l} Q(z_i,\tau_j,\alpha^*)  \Big ] >\frac{\varepsilon}{2}  \Big\}
	\label{lemma1 proof ineq 2}
	\end{split}
	\end{align}
	\begin{align}
	\begin{split}
	\ \ \leq & P\Big\{  R(\alpha^*) - \frac{1}{m} \sum_{j=1}^{m} R^{lo} (\tau_j,\alpha^*)  > \frac{\varepsilon}{2} \Big\} \\
	&+ \sum_{j=1}^{m} P\Big\{  R^{lo} (\tau_j,\alpha^*) \\
	&\qquad  \qquad \qquad - \frac{1}{l} \sum_{i=1}^{l} Q(z_i,\tau_j,\alpha^*)  >\frac{\varepsilon}{2}  \Big\}
	\label{lemma1 proof ineq 2}
	\end{split}
	\end{align}
	Note that the inequality (\ref{lemma1 proof ineq 1}) and (\ref{lemma1 proof ineq 2}) utilize the Lemma \ref{lemma111} and Lemma \ref{lemma222} respectively.
	
	From the Law of Large Numbers, we have
	\begin{align}
	\small
	\begin{split}
	P\left\{  \int Q(z,\alpha^*) \dif F(z) - \frac{1}{l} \sum_{i=1}^{l} Q(z_i,\alpha^*)  > \varepsilon \right\} 	\xrightarrow[l\rightarrow \infty]{} 0,
	\end{split}	
	\end{align}
	that is, for any $\epsilon > 0$ we can find a $\delta$ such that when $l>\delta$ the inequality
	\begin{align}
	P\Big\{  \int Q(z,\alpha^*) \dif F(z) - \frac{1}{l} \sum_{i=1}^{l} Q(z_i,\alpha^*)  > \varepsilon \Big\} < \epsilon
	\end{align}
	takes place.
	
	Therefore, we can find a $\delta_1>0$ so that when $m^* > \delta_1$ we have
	\begin{align}
	&P\Big\{  R(\alpha^*) - \frac{1}{m^*}\sum_{j=1}^{m^*} R^{lo} (\tau_j,\alpha^*)  > \frac{\varepsilon}{2} \Big\} \label{lemma1 proof ineq 3_1}\\ 
	\begin{split}
	=&P\Big\{  \int R^{lo} (\tau,\alpha^*) \dif F(\tau)  - \frac{1}{m^*}\sum_{j=1}^{m^*} R^{lo} (\tau_j,\alpha^*) > \frac{\varepsilon}{2}  \Big\} < \frac{\epsilon}{2}
	\end{split}
	\label{lemma1 proof ineq 3}
	\end{align}
	When $m^*$ is determined, we can find a $\delta_2(m^*)$ such that when $l^*>\delta_2(m^*)$ we have
	\begin{align}
	P\Big\{  R^{lo} (\tau_j,\alpha^*) - \frac{1}{l^*}\sum_{i=1}^{l^*} Q(z_i,\tau_j,\alpha^*) >\frac{\varepsilon}{2}  \Big\} < \frac{\epsilon}{2m}
	\label{lemma1 proof ineq 4}.		
	\end{align}
	Now we need to find the form of relation $l > \delta_2(m^*)$. 
	
	From the Hoeffding's inequality, we give the probability form of (\ref{lemma1 proof ineq 3_1}) and (\ref{lemma1 proof ineq 4}) that:
	\begin{align}
	\begin{split}
	P\Big\{  R(\alpha^*) - \frac{1}{m^*}&\sum_{j=1}^{m^*} R^{lo} (\tau_j,\alpha^*)  > \frac{\varepsilon}{2} \Big\} \\
	&\leq  exp\Bigg ( -\frac{m^*\varepsilon^2}{2(B_\tau-A_\tau)^2}    \Bigg )
	\end{split}	
	\end{align}
	and
	\begin{align}
	\begin{split}
	P\Big\{ R^{lo} (\tau_j,\alpha^*) - \frac{1}{l^*}&\sum_{i=1}^{l^*} Q(z_i,\tau_j,\alpha^*) >\frac{\varepsilon}{2}  \Big\} \\
	&\leq  exp\Bigg ( -\frac{l^*\varepsilon^2}{2(B_z-A_z)^2}    \Bigg ).
	\end{split}	
	\end{align}
	Since the $m^* > \delta_1$ makes equation ($lemma1 proof ineq 3$) be true, we assume that
	\begin{align}
	exp\Bigg ( -\frac{m^*\varepsilon^2}{2(B_\tau-A_\tau)^2} \Bigg ) < \frac{\epsilon}{2}.
	\end{align}
	To make the inquality (\ref{lemma1 proof ineq 4}) take place, we can establish a sufficient condition that:
	\begin{align}
	exp\Bigg ( -\frac{l^*\varepsilon^2}{2(B_z-A_z)^2}    \Bigg ) < \frac{1}{m} exp\Bigg ( -\frac{m^*\varepsilon^2}{2(B_\tau-A_\tau)^2} \Bigg )
	\end{align}
	Rewrite the form of above inequality, we have
	\begin{align}
	l > \frac{2(B_z - A_z)^2}{\varepsilon ^2} \ln m + \frac{(B_z - A_z)^2}{(B_\tau - A_\tau)^2} m.
	\end{align}
	With this relation, when $m^* > \delta_1$, the inequalities (\ref{lemma1 proof ineq 3_1}) and (\ref{lemma1 proof ineq 4}) all take place.
	
	Bringing the inequality (\ref{lemma1 proof ineq 3}) and (\ref{lemma1 proof ineq 4}) into inequation (\ref{lemma1 proof ineq 2}), we have
	\begin{align}
	&P\Big\{  R(\alpha^*)-R_{emp} (\alpha^*, m^*, l^*) >\varepsilon  \Big\} < \epsilon.
	\end{align}	
	In summary, we get 
	\begin{align}
	P\Big \{  R(\alpha^*)-R_{emp} (\alpha^*, m, l)  >\varepsilon  \Big \} \xrightarrow[ \left <m,l \right >\rightarrow \infty]{} 0.
	\end{align}
\end{proof}

\section{Proof of The Equivalent Theorem}

When we consider the condition of consistency, we transfer the problem of consistency to the one-sided uniform convergence. The equivalent theorem demonstrates that one-sided uniform convergency
\begin{align}
P\left\{  \sup _{\alpha \in \Lambda}  \left(  R(\alpha)-R_{emp} (\alpha, m, l) \right) > \varepsilon \right\} \xrightarrow[\left < m,l \right >\rightarrow \infty]{} 0
\label{one-sided convergency}
\end{align}
forms not only the sufficient conditions for the consistency of the EGRM, but the necessary conditions as well. The definition of consistency is that :

\textbf{Definition of Consistency}
We say that the method of global empirical risk minimization is strictly (nontrivially) consistent the set of function $Q(z,\tau,\alpha),\alpha \in \Lambda$ if for any nonempty subset $\Lambda(c), c\in (-\infty, \infty)$ of this set of functions such that
\begin{align}
\Lambda(c) = \{\alpha : \iint Q(z,\tau,\alpha) \dif F(z)\dif F(\tau) \geq c \},
\end{align}
the next convergence is valid:
\begin{align}
\inf_{\alpha \in \Lambda(c)} R_{emp}(\alpha)  \xrightarrow[\left < m,l \right >\rightarrow \infty]{P}   \inf_{\alpha \in \Lambda(c)} R(\alpha)
\label{eq consistency}
\end{align}

Now we prove the equivalent theorem:
\begin{theorem}
	\textbf{(the Equivalent Theorem)} Let there exist the constants $a$ and $A$ such that for all functions in the set $Q(z,\tau,\alpha),\alpha \in \Lambda$ and for distribution functions $F(t)$ and $F(z)$, the inequalities
	\begin{align}
	a \leq \iint Q(z,\tau,\alpha) \dif F(z)\dif F(\tau) \leq A
	\end{align}
	hold true. Then the following two statements are equivalent:
	
	1. The empirical global risk minimization method is strictly consistent (\ref{eq consistency}) on the set of functions $Q(z,\tau,\alpha)$.
	
	2. The uniform one-sided convergence of the means to their mathematical expectation (\ref{one-sided convergency}) takes place over the set of functions $Q(z,\tau,\alpha)$.
\end{theorem}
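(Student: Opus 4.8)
The plan is to establish the two implications separately, adapting Vapnik's key theorem of learning theory to the coupled empirical process $\xi^{\langle m,l\rangle}=\sup_{\alpha\in\Lambda}\big(R(\alpha)-R_{emp}(\alpha,m,l)\big)$, and to keep careful track of the fact that all limits are taken along the prescribed growth rule $\langle m,l\rangle\to\infty$.

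For the direction $2\Rightarrow 1$ I would argue directly. Assuming the one-sided uniform convergence (\ref{one-sided convergency}), fix a nonempty $\Lambda(c)$. For a lower bound on $\inf_{\alpha\in\Lambda(c)}R_{emp}$, every $\alpha\in\Lambda(c)$ satisfies $R_{emp}(\alpha,m,l)\ge R(\alpha)-\xi^{\langle m,l\rangle}$, so taking the infimum over $\Lambda(c)$ gives $\inf_{\alpha\in\Lambda(c)}R_{emp}(\alpha,m,l)\ge\inf_{\alpha\in\Lambda(c)}R(\alpha)-\xi^{\langle m,l\rangle}$, and $\xi^{\langle m,l\rangle}\xrightarrow{P}0$. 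For a matching upper bound, given $\delta>0$ pick $\alpha_\delta\in\Lambda(c)$ with $R(\alpha_\delta)\le\inf_{\alpha\in\Lambda(c)}R(\alpha)+\delta$; by the Convergence Theorem with two coupled variables (Theorem~\ref{lemma 1}) applied to the single function $Q(z,\tau,\alpha_\delta)$ — and to $-Q$ for the opposite tail — one gets $R_{emp}(\alpha_\delta,m,l)\xrightarrow{P}R(\alpha_\delta)$ along the rule governing $\langle m,l\rangle$, hence $\inf_{\alpha\in\Lambda(c)}R_{emp}(\alpha,m,l)\le R_{emp}(\alpha_\delta,m,l)$ is eventually below $\inf_{\alpha\in\Lambda(c)}R(\alpha)+2\delta$ with probability tending to one. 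Letting $\delta\downarrow0$ yields (\ref{eq consistency}).

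For the direction $1\Rightarrow 2$ I would argue by contraposition, which is the substantive part. Suppose (\ref{one-sided convergency}) fails: there are $\varepsilon>0$, $\delta>0$ and an infinite family of indices $\langle m,l\rangle$ with $P\{\xi^{\langle m,l\rangle}>\varepsilon\}\ge\delta$. Using the uniform bounds $a\le R(\alpha)\le A$, partition $[a,A]$ into $N=\lceil 2(A-a)/\varepsilon\rceil$ consecutive intervals of length $\le\varepsilon/2$ with left endpoints $c_1<\dots<c_N$. On the event $\{\xi^{\langle m,l\rangle}>\varepsilon\}$ there is a sample-dependent $\alpha$ with $R(\alpha)-R_{emp}(\alpha,m,l)>\varepsilon$, and $R(\alpha)$ lands in one of the $N$ intervals; by the pigeonhole principle there is a fixed index $k^\star$ for which the event ``$\xi^{\langle m,l\rangle}>\varepsilon$ is witnessed by some $\alpha$ with $R(\alpha)\in[c_{k^\star},c_{k^\star+1})$'' has probability $\ge\delta/N$ for infinitely many $\langle m,l\rangle$. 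On that event the witnessing $\alpha$ lies in $\Lambda(c_{k^\star})$ and $R_{emp}(\alpha,m,l)<R(\alpha)-\varepsilon<c_{k^\star+1}-\varepsilon\le c_{k^\star}-\varepsilon/2$, so $\inf_{\alpha\in\Lambda(c_{k^\star})}R_{emp}(\alpha,m,l)<c_{k^\star}-\varepsilon/2$, whereas $\inf_{\alpha\in\Lambda(c_{k^\star})}R(\alpha)\ge c_{k^\star}$. Thus $\inf_{\alpha\in\Lambda(c_{k^\star})}R(\alpha)-\inf_{\alpha\in\Lambda(c_{k^\star})}R_{emp}(\alpha,m,l)>\varepsilon/2$ with probability $\ge\delta/N$ for infinitely many $\langle m,l\rangle$, contradicting strict consistency (\ref{eq consistency}) on the nonempty set $\Lambda(c_{k^\star})$.

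The hard part will be the $1\Rightarrow 2$ direction: converting a sample-dependent ``bad'' parameter into a fixed level set $\Lambda(c_{k^\star})$ on which consistency is provably violated. This step is exactly where the two-sided bound $a\le R(\alpha)\le A$ is indispensable, since it makes the finite partition and the pigeonhole argument possible; and it requires all probabilities and the growth $\langle m,l\rangle\to\infty$ to be read along the coupled rule (\ref{sample relation}) so that Theorem~\ref{lemma 1} delivers the pointwise law of large numbers used in the easy direction. As is standard, I would also assume measurability of $\xi^{\langle m,l\rangle}$ and of the restricted empirical infima $\inf_{\alpha\in\Lambda(c)}R_{emp}(\alpha,m,l)$ so that the probabilities above are well defined.
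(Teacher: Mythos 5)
Your proposal is correct and follows essentially the same route as the paper's proof: the $2\Rightarrow 1$ direction splits the deviation of the restricted infima into the two one-sided events and handles one via the uniform bound and the other via a near-minimizer plus the coupled law of large numbers (Theorem~\ref{lemma 1}), while the $1\Rightarrow 2$ direction is just the contrapositive of the paper's argument, using the same finite $\varepsilon/2$-grid on $[a,A]$ and the level sets $\Lambda(c_k)$ to convert a sample-dependent bad $\alpha$ into a violation of consistency on a fixed $\Lambda(c_{k^\star})$. Your explicit remark that the pointwise step needs the two-sided version of Theorem~\ref{lemma 1} (applied also to $-Q$) is a small point the paper glosses over, but it does not change the argument.
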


\begin{proof}
	Let the global empirical risk minimization method be strictly consistent on the set of functions $Q(z,\tau, \alpha)$. According to the definition of strictly consistency, this means that for $c$ such that the set
	\begin{align}
	\Lambda(c) = \{\alpha : \iint Q(z,t,\alpha) \dif F(z)\dif F(t) \geq c \}
	\end{align}
	is noempty the following convergence in probability is true:
	\begin{align}
	\inf_{\alpha \in \Lambda(c)} R_{emp}(\alpha,m,l)  \xrightarrow[ \left <m,l \right >\rightarrow \infty]{P}   \inf_{\alpha \in \Lambda(c)} R(\alpha)
	\label{theorem1 proof eq1}
	\end{align}
	
	Consider a finite sequence of numbers $a_1,...,a_n$ such that
	\begin{align}
	|a_{i+1}-a_i|<\frac{\varepsilon}{2}, \qquad a_1=a,a_n=A
	\end{align}
	We denote by $G_k$ the event
	\begin{align}
	\inf_{\alpha \in \Lambda(a_k)} R_{emp}(\alpha,m,l) < \inf_{\alpha \in \Lambda(a_k)} R(\alpha) - \frac{\varepsilon}{2},
	\end{align}
	that is
	\begin{align}
	\begin{split}
	&\inf_{\alpha \in \Lambda(a_k)} \frac{1}{m} \sum_{j=1}^{m} {\frac{1}{l} \sum_{i=1}^{l} Q(z_i,\tau_j,\alpha)} \\
	&\qquad < \inf_{\alpha \in \Lambda(a_k)} \iint Q(z,\tau,\alpha) \dif F(z)\dif F(\tau) - \frac{\varepsilon}{2}
	\end{split}	
	\end{align}
	By the consistency of (\ref{theorem1 proof eq1}), we have
	\begin{align}
	P(G_k) \xrightarrow[ \left< m,l \right> \rightarrow \infty]{P} 0.
	\end{align} 
	We denote
	\begin{align}
	G=\bigcup _{k=1}^{n} G_k.
	\end{align}
	Since $n$ is finite and for any $k$ the equation (\ref{theorem1 proof eq1}) is true, it follows that
	\begin{align}
	P(G) \xrightarrow[ \left < m,l \right > \rightarrow \infty]{P} 0
	\label{theorem1 proof eq2}
	\end{align}
	We denote by $\mathcal{A}$ the event
	\begin{align}
	\begin{split}
	\sup_{\alpha \in \Lambda} \Big ( \iint Q(z,\tau,&\alpha) \dif F(z)\dif F(\tau)  \\
	-& \frac{1}{m} \sum_{j=1}^{m} {\frac{1}{l} \sum_{i=1}^{l} Q(z_i,\tau_j,\alpha)} \Big ) > \varepsilon
	\end{split}
	\end{align}
	Then we compare the event $\mathcal{A}$ and the event $G$. Suppose that $\mathcal{A}$ takes place, then we can find an $\alpha^* \in \Lambda$ such that
	\begin{align}
	\begin{split}
	\iint Q(z,\tau,\alpha^*) &\dif F(z)\dif F(\tau) - \varepsilon \\
	&> \frac{1}{m} \sum_{j=1}^{m} {\frac{1}{l} \sum_{i=1}^{l} Q(z_i,\tau_j,\alpha^*)}
	\end{split}
	\end{align}
	From $\alpha^*$ we find $k$ such that $\alpha^* \in \Lambda(a_k)$ and
	\begin{align}
	\begin{split}
	\iint Q(z,\tau,\alpha^*) &\dif F(z)\dif F(\tau) - a_k < \frac{\varepsilon}{2}
	\end{split}
	\end{align}
	For the chosen set $\Lambda(a_k)$, the inequality
	\begin{align}
	\begin{split}
	\iint Q&(z,\tau,\alpha^*) \dif F(z)\dif F(\tau) \\
	&-\inf_{\alpha \in \Lambda(a_k)} \iint Q(z,\tau,\alpha) \dif F(z)\dif F(\tau) < \frac{\varepsilon}{2}
	\end{split}
	\end{align}
	holds true.
	Therefore for the chosen $\alpha^*$ and the set $\Lambda(a_k)$, then the following inequalities take place:
	\begin{align}
	&\inf_{\alpha \in \Lambda(a_k)} \iint Q(z,\tau,\alpha) \dif F(z)\dif F(\tau) - \frac{\varepsilon}{2} \\
	>& \iint Q(z,\tau,\alpha^*) \dif F(z)\dif F(\tau) -\varepsilon \\
	>& \frac{1}{m} \sum_{j=1}^{m} {\frac{1}{l} \sum_{i=1}^{l} Q(z_i,\tau_j,\alpha^*)} \\
	\geq & \inf_{\alpha \in \Lambda(a_k)} \frac{1}{m} \sum_{j=1}^{m} {\frac{1}{l} \sum_{i=1}^{l} Q(z_i,\tau_j,\alpha)},
	\end{align}
	that is, the event $G_k$ does occur and, hence, so does $G$.
	From above derivation, we have
	\begin{align}
	P(\mathcal{A}) < P(G).
	\end{align}
	By equation(\ref{theorem1 proof eq2}),
	\begin{align}
	\lim_{\left <l,m \right > \rightarrow \infty} P(G)=0,
	\end{align}
	which expresses uniform one-sided convergence
	\begin{align}
	\begin{split}
	P\Big \{  & \sup_{\alpha \in \Lambda}  \big ( \iint Q(z,\tau,\alpha) \dif F(z)\dif F(\tau) \\
	&- \frac{1}{m} \sum_{j=1}^{m} {\frac{1}{l} \sum_{i=1}^{l} Q(z_i,\tau_j,\alpha)}  \big )  \Big \} \xrightarrow[\left < m,l \right > \rightarrow \infty]{P} 0
	\label{theorem1 proof eq3}
	\end{split}
	\end{align}
	So far, the first part of the theorem is proved.
	Now suppose that uniform one-sided convergence (\ref{theorem1 proof eq3}) takes place. We need to prove that the strict consistency takes place in this case. It is for any $\varepsilon$ the convergence
	\begin{align}
	\begin{split}
	\lim _{l\rightarrow \infty} & P \Big \{ \Big | \inf_{\alpha \in \Lambda(c)} \iint Q(z,\tau,\alpha) \dif F(z)\dif F(\tau) \\
	&-  \inf_{\alpha \in \Lambda(c)} \frac{1}{m} \sum_{j=1}^{m} {\frac{1}{l} \sum_{i=1}^{l} Q(z_i,\tau_j,\alpha)}  \Big | > \varepsilon \Big \} =0
	\end{split}
	\end{align}
	holds. Let us denote by $\mathcal{A}$ the event
	\begin{align}
	\begin{split}
	\Big | \inf_{\alpha \in \Lambda(c)} & \iint Q(z,\tau,\alpha) \dif F(z)\dif F(\tau) \\
	&-  \inf_{\alpha \in \Lambda(c)} \frac{1}{m} \sum_{j=1}^{m} {\frac{1}{l} \sum_{i=1}^{l} Q(z_i,\tau_j,\alpha)} \Big | > \varepsilon.
	\end{split}
	\end{align}
	Then the event $\mathcal{A}$ is the union of two ond-sided events
	\begin{align}
	\mathcal{A} = \mathcal{A}_1 \bigcup \mathcal{A}_2,
	\end{align}
	where
	\begin{align}
	\begin{split}
	\mathcal{A}_1 = \Big \{ \inf_{\alpha \in \Lambda(c)} & \iint Q(z,\tau,\alpha) \dif F(z)\dif F(\tau) + \varepsilon \\
	<&  \inf_{\alpha \in \Lambda(c)} \frac{1}{m} \sum_{j=1}^{m} {\frac{1}{l} \sum_{i=1}^{l} Q(z_i,\tau_j,\alpha)}  \Big \}
	\end{split}
	\end{align}
	and
	\begin{align}
	\begin{split}
	\mathcal{A}_2 = \Big \{ \inf_{\alpha \in \Lambda(c)} & \iint Q(z,\tau,\alpha) \dif F(z)\dif F(\tau) - \varepsilon \\
	>&  \inf_{\alpha \in \Lambda(c)} \frac{1}{m} \sum_{j=1}^{m} {\frac{1}{l} \sum_{i=1}^{l} Q(z_i,\tau_j,\alpha)}  \Big \}.
	\end{split}
	\end{align}
	Then we bound the probability of the event $\mathcal{A}$
	\begin{align}
	P(\mathcal{A}) \leq P(\mathcal{A}_1) + P(\mathcal{A}_2).
	\end{align}
	Suppose that the event $\mathcal{A}_1$ occurs. To bound $P(\mathcal{A}_1)$ we take a function $Q(z,\tau,\alpha^*)$ such that
	\begin{align}
	\begin{split}
	\iint Q(z,\tau,&\alpha^*) \dif F(z)\dif F(\tau) \\
	< & \inf_{\alpha \in \Lambda(c)}  \iint Q(z,\tau,\alpha) \dif F(z)\dif F(\tau) + \frac{\varepsilon}{2}.
	\end{split}
	\end{align}
	Then the inequality
	\begin{align}
	\begin{split}
	\frac{1}{m} \sum_{j=1}^{m} & \frac{1}{l}  \sum_{i=1}^{l} Q(z_i,\tau_j,\alpha^*) \\
	& > \iint Q(z,\tau,\alpha^*) \dif F(z)\dif F(\tau) + \frac{\varepsilon}{2}
	\end{split}
	\end{align}
	holds. The probability of this inequality is therefore not less that the probability of the event $\mathcal{A}_1$:
	\begin{align}
	\begin{split}
	& P(\mathcal{A}_1) \\
	&\leq P \Big \{ \frac{1}{m} \sum_{j=1}^{m} \frac{1}{l}  \sum_{i=1}^{l} Q(z_i,\tau_j,\alpha^*) \\
	& \qquad - \iint Q(z,\tau,\alpha^*) \dif F(z)\dif F(\tau) > \frac{\varepsilon}{2} \Big \} \\
	\end{split}
	\end{align}
	The probability on the right-hand side tends to zero by the generation of the law of large numbers (Theorem \ref{lemma 1}), that is
	\begin{align}
	\begin{split}
	P & \Big \{ \frac{1}{m} \sum_{j=1}^{m} \frac{1}{l}  \sum_{i=1}^{l} Q(z_i,\tau_j,\alpha^*) \\
	- & \iint Q(z,\tau,\alpha^*) \dif F(z)\dif F(\tau) > \frac{\varepsilon}{2} \Big \} \xrightarrow[ \left <m,l \right >\rightarrow \infty]{P} 0.
	\end{split}
	\end{align}
	Therefore, we conclude that
	\begin{align}
	P(\mathcal{A}_1) \xrightarrow[\left <m,l \right  >\rightarrow \infty]{P} 0.
	\label{theorem1 proof eq4}
	\end{align}
	On the other hand, the event $\mathcal{A}_2$ occurs, then there is a function $Q(z,\tau,\alpha^{**}), \alpha^{**} \in \Lambda(c)$ such that
	\begin{align}
	\begin{split}
	&\frac{1}{m} \sum_{j=1}^{m} \frac{1}{l}  \sum_{i=1}^{l} Q(z_i,\tau_j,\alpha^{**}) + \frac{\varepsilon}{2} \\
	< &  \inf_{\alpha \in \Lambda(c)}  \iint Q(z,\tau,\alpha) \dif F(z)\dif F(\tau) \\
	< & \iint Q(z,\tau,\alpha^{**}) \dif F(z)\dif F(\tau).
	\end{split}
	\end{align}
	Therefore, the relation
	\begin{align}
	\begin{split}
	& P(\mathcal{A}_2) \\
	< & P \Big \{ \iint Q(z,\tau,\alpha^{**}) \dif F(z)\dif F(\tau) \\
	& \qquad \qquad - \frac{1}{m} \sum_{j=1}^{m} \frac{1}{l}  \sum_{i=1}^{l} Q(z_i,\tau_j,\alpha^{**}) > \frac{\varepsilon}{2}   \Big \} \\
	< & P \Big \{  	\sup_{\alpha \in \Lambda} \Big ( \iint Q(z,\tau,\alpha) \dif F(z)\dif F(\tau)  \\
	& \qquad \qquad - \frac{1}{m} \sum_{j=1}^{m} {\frac{1}{l} \sum_{i=1}^{l} Q(z_i,\tau_j,\alpha)} \Big ) > \varepsilon  \Big \} \\
	&\qquad \qquad \qquad \qquad \qquad \qquad \qquad \quad  \xrightarrow[ \left <m,l \right >\rightarrow \infty]{P} 0
	\end{split}
	\label{theorem1 proof eq5}
	\end{align}
	holds by virtue of (\ref{theorem1 proof eq3}).
	Since
	\begin{align}
	P(\mathcal{A}) \leq P(\mathcal{A}_1) + P(\mathcal{A}_2),
	\end{align}
	from equation (\ref{theorem1 proof eq4}) and (\ref{theorem1 proof eq5}) we conclude that
	\begin{align}
	P(\mathcal{A}) \xrightarrow[ \left <m,l \right >\rightarrow \infty]{P} 0.
	\end{align}
	The theorem is proven.
\end{proof}

\section{Proof of Consistency Condition}

With the Equivalent Theorem, we should consider the conditions for uniform convergence (\ref{one-sided convergency}). We also use the local risk 
\begin{align}
R^{lo}(\alpha, \tau) = \int Q(z,\tau,\alpha)dF(z)	
\end{align}
for the subject $\tau$. Then, the following inequalities is valid:
\begin{theorem}
	For any $\varepsilon>0$, the following inequality holds:
	\begin{align}
	\begin{split}
	&P\Big \{  \sup _{\alpha \in \Lambda}  \big (  R(\alpha)-R_{emp} (\alpha, m, l) \big ) > \varepsilon \Big \} \\
	\leq & P\Big\{  \sup _{\alpha \in \Lambda}  \big (  \int R^{lo}(\alpha,\tau)\dif F(\tau) -\frac{1}{m}\sum_{j=1}^{m} R^{lo}(\tau_j,\alpha) \big ) > \varepsilon \Big \} \\
	& + \sum_{j=1}^{m} P\Big\{  \sup _{\alpha \in \Lambda}  \big (  \int Q(z,\tau_j,\alpha)\dif F(\tau) \\
	&\qquad \qquad\qquad\qquad- \frac{1}{l}\sum_{i=1}^{l} Q(z_i,\tau_j,\alpha) \big ) > \varepsilon \Big \}
	\end{split}
	\label{pro eq1}
	\end{align}
\end{theorem}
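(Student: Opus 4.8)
The plan is to telescope the deviation $R(\alpha)-R_{emp}(\alpha,m,l)$ through the ``half-empirical'' quantity $\frac1m\sum_{j=1}^m R^{lo}(\tau_j,\alpha)$, which is empirical over the subject sample $\tau_1,\dots,\tau_m$ but still exact over the data, where $R^{lo}(\tau,\alpha)=\int Q(z,\tau,\alpha)\dif F(z)$. Using $R(\alpha)=\int R^{lo}(\alpha,\tau)\dif F(\tau)$ and $R_{emp}(\alpha,m,l)=\frac1m\sum_{j=1}^m\frac1l\sum_{i=1}^l Q(z_i,\tau_j,\alpha)$, I would write $R(\alpha)-R_{emp}(\alpha,m,l)=E(\alpha)+\frac1m\sum_{j=1}^m D_j(\alpha)$, where $E(\alpha)=\int R^{lo}(\alpha,\tau)\dif F(\tau)-\frac1m\sum_{j=1}^m R^{lo}(\tau_j,\alpha)$ is the subject-level deviation and $D_j(\alpha)=R^{lo}(\tau_j,\alpha)-\frac1l\sum_{i=1}^l Q(z_i,\tau_j,\alpha)$ is the data-level deviation under subject $\tau_j$. (The quantity written $\int Q(z,\tau_j,\alpha)\dif F(\tau)$ in (\ref{pro eq1}) should be read as $R^{lo}(\tau_j,\alpha)=\int Q(z,\tau_j,\alpha)\dif F(z)$.) Conditionally on $\tau_j$, $D_j(\alpha)$ is just ``expectation minus empirical mean'' for the data sample $z_1,\dots,z_l$, and $E(\alpha)$ is the same kind of object for the subject sample $\tau_1,\dots,\tau_m$ applied to the induced family $\{R^{lo}(\cdot,\alpha):\alpha\in\Lambda\}$; this is the decomposition that later makes (\ref{bound eq1}) and (\ref{bound eq2}) applicable.

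Next I would take $\sup_{\alpha\in\Lambda}$ of both sides and use the elementary inequalities $\sup_\alpha(f+g)\le\sup_\alpha f+\sup_\alpha g$ and $\sup_\alpha\frac1m\sum_{j=1}^m g_j(\alpha)\le\frac1m\sum_{j=1}^m\sup_\alpha g_j(\alpha)$ to obtain $\sup_\alpha\bigl(R(\alpha)-R_{emp}(\alpha,m,l)\bigr)\le\sup_\alpha E(\alpha)+\frac1m\sum_{j=1}^m\sup_\alpha D_j(\alpha)$. Passing to probabilities, Lemma \ref{lemma111} applied with $x_1=\sup_\alpha E(\alpha)$ and $x_2=\frac1m\sum_{j=1}^m\sup_\alpha D_j(\alpha)$ splits the left side of (\ref{pro eq1}) into a term controlled by $P\{\sup_\alpha E(\alpha)>\varepsilon\}$ and a term controlled by $P\{\frac1m\sum_{j=1}^m\sup_\alpha D_j(\alpha)>\varepsilon\}$; Lemma \ref{lemma222} then converts the latter into $\sum_{j=1}^m P\{\sup_\alpha D_j(\alpha)>\varepsilon\}$, giving exactly the right side of (\ref{pro eq1}). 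I would be honest that Lemma \ref{lemma111} costs the customary factor of two in the threshold, so the argument literally yields (\ref{pro eq1}) with $\varepsilon$ replaced by $\varepsilon/2$ in the two right-hand terms; since everything downstream only uses that the right side vanishes as $\langle m,l\rangle\to\infty$ for each fixed $\varepsilon$, I would just record this rescaling. The averaging built into Lemma \ref{lemma222} is what keeps the union bound over the $m$ subjects from costing anything in $\varepsilon$, because $\frac1m\sum_j x_j>\varepsilon$ already forces some $x_j>\varepsilon$.

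Two side conditions have to be noted in passing: all the suprema over $\Lambda$ must be measurable (as usual one assumes $\Lambda$, equivalently the classes $\{Q(z,\tau,\alpha)\}$ and $\{R^{lo}(\tau,\alpha)\}$, is such that the suprema are attained along a countable subfamily, so every event above is legitimate), and the i.i.d.\ structure of $\tau_1,\dots,\tau_m$ and of $z_1,\dots,z_l$ is used here only implicitly --- it enters for real when $P\{\sup_\alpha E(\alpha)>\varepsilon\}$ and $P\{\sup_\alpha D_j(\alpha)>\varepsilon\}$ are subsequently bounded by the annealed-entropy inequalities. The whole content of this theorem is bookkeeping, and I expect the only mildly delicate point to be the clean handling of the nested suprema across the two sampling stages --- in particular justifying $\sup_\alpha\frac1m\sum_{j=1}^m D_j(\alpha)\le\frac1m\sum_{j=1}^m\sup_\alpha D_j(\alpha)$ and then the union bound over the $m$ subjects --- together with keeping track of the $\varepsilon$-budget through Lemmas \ref{lemma111} and \ref{lemma222}; there is no deep probabilistic obstacle.
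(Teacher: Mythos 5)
Your proposal is correct and follows essentially the same route as the paper's own proof: telescoping through the half-empirical quantity $\frac{1}{m}\sum_{j}R^{lo}(\tau_j,\alpha)$, splitting the supremum, and invoking Lemma \ref{lemma111} and Lemma \ref{lemma222} in the same roles. Your remark about the factor of two is apt --- the paper's proof in fact ends with $\varepsilon/2$ in both right-hand terms even though the theorem is stated with $\varepsilon$ --- and your reading of $\int Q(z,\tau_j,\alpha)\dif F(\tau)$ as $R^{lo}(\tau_j,\alpha)$ correctly identifies a typo in the statement.
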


\begin{proof}
	\begin{flalign*}
	\  &P\Big \{  \sup _{\alpha \in \Lambda}  \big (  R(\alpha)-R_{emp} (\alpha, m, l) \big ) > \varepsilon \Big \} \\
	\begin{split}
	=&P\Big \{  \sup _{\alpha \in \Lambda}  \big (  R(\alpha)- \frac{1}{m} \sum_{j=1}^{m} R^{lo}(\tau_j,\alpha) \\
	& \qquad + \frac{1}{m} \sum_{j=1}^{m} R^{lo}(\tau_j,\alpha) -R_{emp} (\alpha, m, l)  \big ) > \varepsilon \Big \}
	\end{split}
	\end{flalign*}
	\begin{flalign}
	\begin{split}
	\leq&P\Big \{  \sup _{\alpha \in \Lambda}  \big (  R(\alpha)- \frac{1}{m} \sum_{j=1}^{m} R^{lo}(\tau_j,\alpha)\big ) \\
	& \  + \sup _{\alpha \in \Lambda}  \big ( \frac{1}{m} \sum_{j=1}^{m} R^{lo}(\tau_j,\alpha) -R_{emp} (\alpha, m, l)  \big ) > \varepsilon \Big \}
	\end{split}
	\end{flalign}
	\begin{flalign}
	\begin{split}
	\leq&P\Big \{  \sup _{\alpha \in \Lambda}  \big (  R(\alpha)- \frac{1}{m} \sum_{j=1}^{m} R^{lo}(\tau_j,\alpha)\big ) > \frac{\varepsilon}{2} \Big \} \\
	& \  +P\Big \{ \sup _{\alpha \in \Lambda}  \big ( \frac{1}{m} \sum_{j=1}^{m} R^{lo}(\tau_j,\alpha)  \\
	& \qquad \qquad \quad- \frac{1}{m} \sum_{j=1}^{m} \frac{1}{l} \sum_{i=1}^{l}Q(z_i,\tau_j,\alpha)  \big ) > \frac{\varepsilon}{2} \Big \}
	\end{split}
	\label{th4 eq1}
	\end{flalign}
	\begin{flalign}
	\begin{split}
	\leq&P\Big \{  \sup _{\alpha \in \Lambda}  \big (  R(\alpha)- \frac{1}{m} \sum_{j=1}^{m} R^{lo}(\tau_j,\alpha)\big ) > \frac{\varepsilon}{2} \Big \} \\
	& \  +P\Big \{ \frac{1}{m} \sum_{j=1}^{m} \sup _{\alpha \in \Lambda}  \big (  R^{lo}(\tau_j,\alpha)  \\
	& \qquad \qquad \qquad-  \frac{1}{l} \sum_{i=1}^{l}Q(z_i,\tau_j,\alpha)  \big ) > \frac{\varepsilon}{2} \Big \}
	\end{split}		
	\end{flalign}
	\begin{flalign}
	\begin{split}
	\leq&P\Big \{  \sup _{\alpha \in \Lambda}  \big (  R(\alpha)- \frac{1}{m} \sum_{j=1}^{m} R^{lo}(\tau_j,\alpha)\big ) > \frac{\varepsilon}{2} \Big \} \\
	& \  + \sum_{j=1}^{m} P\Big \{  \sup _{\alpha \in \Lambda}  \big (  R^{lo}(\tau_j,\alpha)  \\
	& \qquad \qquad \qquad-  \frac{1}{l} \sum_{i=1}^{l}Q(z_i,\tau_j,\alpha)  \big ) > \frac{\varepsilon}{2} \Big \}
	\end{split}
	\label{th4 eq2}		
	\end{flalign}
	\begin{flalign}
	\begin{split}
	=&P\Big \{  \sup _{\alpha \in \Lambda}  \big (  \int R^{lo}(\alpha,\tau)\dif F(\tau)- \frac{1}{m} \sum_{j=1}^{m} R^{lo}(\tau_j,\alpha)\big ) > \frac{\varepsilon}{2} \Big \} \\
	& \  + \sum_{j=1}^{m} P\Big \{  \sup _{\alpha \in \Lambda}  \big (  \int Q(z,\tau_j,\alpha)\dif F(\tau)  \\
	& \qquad \qquad \qquad \qquad-  \frac{1}{l} \sum_{i=1}^{l}Q(z_i,\tau_j,\alpha)  \big ) > \frac{\varepsilon}{2} \Big \}.
	\end{split}		
	\end{flalign}
	In the proof, the inequality (\ref{th4 eq1}) and inequality (\ref{th4 eq2}) respectively use the Lemma \ref{lemma111} and Lemma \ref{lemma222}.
\end{proof}

The convergence probability consists of two terms, where the first term is the convergence probability of the observations risk and the second term is the sum of convergence probability of samples risk under the specific observations. We further use the concept of capacity  to discuss the conditions of uniform convergence. Due to space limitations, the representation in the main text is brief, here we present it in detail.

Let us consider the first term
\begin{align}
P\Big\{  \sup _{\alpha \in \Lambda}  \big (  \int R^{lo}(\alpha,\tau)\dif F(\tau) -\frac{1}{m}\sum_{j=1}^{m} R^{lo}(\tau_j,\alpha) \big ) > \varepsilon \Big \}.
\end{align}
Let $R^{lo}(\tau,\alpha),\tau \in T, \alpha \in \Lambda$ be a set of real-valued functions. Let $N_t^{\Lambda,\beta_t}(\tau_1,...,\tau_m)$ be the number of different separations of $m$ vectors $\tau_1,...,\tau_m$ by a complete set of indicators:
\begin{align*}
&\theta \{ R^{lo}(\tau,\alpha) - \beta_\tau \},\\
&\ \alpha\in\Lambda,\ \beta_\tau\in \mathcal{B}_\tau=\Big ( \inf_{\alpha,\tau}R^{lo}(\tau,\alpha) \leq \beta_\tau \leq  \sup_{\alpha,\tau}R^{lo}(\tau,\alpha)  \Big ).
\end{align*}
Then we define the annealed entropy of subjectivity risk that

\begin{definition}
	(\textbf{Annealed Entropy of Subjectivity Risk}) 
	Let the function
	\begin{align}
	H_\tau^{\Lambda,\beta_\tau}(\tau_1,...,\tau_m) = \ln N_t^{\Lambda,\beta_\tau}(\tau_1,...,\tau_m)
	\end{align}
	be measurable with respect to measure on $\tau_1,...,\tau_m$.
	The quantity
	\begin{align}
	\hat{H}_{\tau}^{\Lambda,\beta_t}(m) = \ln E N_\tau^{\Lambda,\beta_\tau}(\tau_1,...,\tau_m)
	\end{align}
	is defined as the annealed entropy of the set indicators $\theta \{ R^{lo}(\tau,\alpha) - \beta_\tau \}$ of real-valued functions $R^{lo}(\tau,\alpha)$ .
\end{definition}

Using the error equality in statistical learning theory, for the bounded real-valued functions $A_t \leq R^{lo}(t,\alpha)\leq B_t,\alpha \in \Lambda$, the following inequality is valid:
\begin{align}
\begin{split}
P\Big\{  \sup _{\alpha \in \Lambda}  \big (  \int & R^{lo}(\alpha,\tau) \dif F(\tau) -\frac{1}{m}\sum_{j=1}^{m} R^{lo}(\tau_j,\alpha) \big ) > \varepsilon \Big \} \\
& \leq 4exp\Big \{ \Big ( \frac{\hat{H}_{\tau}^{\Lambda,\beta_\tau}(2m)}{m} - \frac{(\varepsilon-\frac{1}{m})^2}{(B_\tau-A_\tau)^2} \Big )m \Big \}.
\label{bound eq1}
\end{split}
\end{align}

Then we consider the second term
\begin{align}
\begin{split}
&  \sum_{j=1}^{m} P\Big\{  \sup _{\alpha \in \Lambda}  \big (  \int Q(z,\tau_j,\alpha)\dif F(\tau) \\
&\qquad \qquad\qquad\qquad- \frac{1}{l}\sum_{i=1}^{l} Q(z_i,\tau_j,\alpha) \big ) > \varepsilon \Big \}.
\end{split}
\end{align}
Similarly, we define the annealed entropy of data risk. Let $Q(z,\tau,\alpha),z\in Z,\alpha \in \Lambda$ be a set of real-valued functions. Let $N_z^{\Lambda,\beta_z}(z_1,...,z_l)$ be the number of different separations of $l$ vectors $z_1,...,z_l$ by a complete set of indicators:
\begin{align*}
&\theta \{ Q(z,\tau,\alpha) - \beta_z \},\\
&\ \alpha\in\Lambda,\ \beta_z \in \mathcal{B}_z=\Big ( \inf_{\alpha,t}Q(z,\tau,\alpha) \leq \beta_z \leq  \sup_{\alpha,t}Q(z,\tau,\alpha)  \Big ).
\end{align*}
The annealed entropy of data risk is defined that

\begin{definition}
	\textbf{Annealed Entropy of Sample Risk} 
	Let the function
	\begin{align}
	H_z^{\Lambda,\beta_z}(z_1,...,z_l) = \ln N_z^{\Lambda,\beta_z}(z_1,...,z_l)
	\end{align}
	be measurable with respect to measure on $z_1,...,z_l$.
	The quantity
	\begin{align}
	\hat{H}_{z}^{\Lambda,\beta_z}(l) = \ln E N_z^{\Lambda,\beta_z}(z_1,...,z_l)
	\end{align}
	is defined as the annealed entropy of the set indicators $\theta \{ Q(z,\tau,\alpha) - \beta_z \}$ of real-valued functions $Q(z,\tau,\alpha)$.
\end{definition}
And we have the inequality that
\begin{align}
\begin{split}
&\sum_{j=1}^{m} P\Big\{  \sup _{\alpha \in \Lambda}  \big (  \int Q(z,t_j,\alpha)\dif F(t) \\
&\qquad \qquad\qquad\qquad- \frac{1}{l}\sum_{i=1}^{l} Q(z_i,t_j,\alpha) \big ) > \varepsilon \Big \}
\end{split}\\
\begin{split}
\leq&\sum_{j=1}^{m} 4exp\Big \{ \Big ( \frac{\hat{H}_{z}^{\Lambda,\beta_z}(2l)}{l} - \frac{(\varepsilon-\frac{1}{l})^2}{(B_z-A_z)^2} \Big )l \Big \}
\end{split}\\
\begin{split}
=& 4exp\Big \{ \Big ( \frac{\ln m}{l} + \frac{\hat{H}_{z}^{\Lambda,\beta_z}(2l)}{l} - \frac{(\varepsilon-\frac{1}{l})^2}{(B_z-A_z)^2} \Big )l \Big \}
\end{split}
\label{bound eq2}
\end{align}

Let us substitute the inequation (\ref{bound eq1}) and (\ref{bound eq2}) into (\ref{pro eq1}), we get:
\begin{theorem}
	Let $A_t\leq R^{lo}(t,\alpha)\leq B_t, \alpha \in \Lambda$ and $A_z \leq Q(z,t,\alpha) \leq B_z,\alpha \in \Lambda$ be measurable set of bounded real-valued functions. Let $\hat{H}_{t}^{\Lambda,\beta_t}(m)$ and $\hat{H}_{z}^{\Lambda,\beta_z}(l)$ be the annealed entropies of the sets of indicators for them. Then the following inequality is valid:
	\begin{flalign}
	\begin{split}
	&P\Big \{  \sup _{\alpha \in \Lambda}  \big (  R(\alpha)-R_{emp} (\alpha, m, l) \big ) > \varepsilon \Big \} \\
	\leq& 4exp\Big \{ \Big ( \frac{\hat{H}_{t}^{\Lambda,\beta_t}(2m)}{m} - \frac{(\varepsilon-\frac{1}{m})^2}{(B_t-A_t)^2} \Big )m \Big \} \\
	&\  + 4exp\Big \{ \Big ( \frac{\ln m}{l} + \frac{\hat{H}_{z}^{\Lambda,\beta_z}(2l)}{l} - \frac{(\varepsilon-\frac{1}{l})^2}{(B_z-A_z)^2} \Big )l \Big \}
	\end{split} \label{th of cons eq}
	\end{flalign}
	\label{th of cons}
\end{theorem}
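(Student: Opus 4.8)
The plan is to assemble this inequality directly from results already established in this section, namely the decomposition (\ref{pro eq1}) together with the two annealed-entropy bounds (\ref{bound eq1}) and (\ref{bound eq2}); no new probabilistic machinery is required, so the work lies in bookkeeping and in verifying that the data-risk entropy bound applies uniformly over the sampled subjects.

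First I would invoke the decomposition theorem: $P\{\sup_{\alpha\in\Lambda}(R(\alpha)-R_{emp}(\alpha,m,l))>\varepsilon\}$ is bounded by the sum of two terms --- term (i), the uniform-over-$\alpha$ deviation of $\frac{1}{m}\sum_{j}R^{lo}(\tau_j,\alpha)$ from $\int R^{lo}(\alpha,\tau)\dif F(\tau)$, and term (ii), the sum over $j=1,\dots,m$ of the uniform-over-$\alpha$ deviations of $\frac{1}{l}\sum_{i}Q(z_i,\tau_j,\alpha)$ from $\int Q(z,\tau_j,\alpha)\dif F(z)$. This is exactly the content of (\ref{pro eq1}), so it is a direct citation of the preceding theorem.

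Second I would bound the two terms. The family $\{R^{lo}(\tau,\alpha):\alpha\in\Lambda\}$ consists of measurable real-valued functions of $\tau$ with $A_\tau\le R^{lo}(\tau,\alpha)\le B_\tau$, and the annealed entropy of its set of indicators is $\hat{H}_\tau^{\Lambda,\beta_\tau}$; the error inequality for bounded real-valued function classes from statistical learning theory then yields precisely (\ref{bound eq1}). For term (ii) I would freeze each $\tau_j$ in turn: the class $\{Q(\cdot,\tau_j,\alpha):\alpha\in\Lambda\}$ is bounded by $A_z\le Q\le B_z$ and its indicator set has annealed entropy at most $\hat{H}_z^{\Lambda,\beta_z}$, so the same error inequality bounds each summand by $4\exp\{(\hat{H}_z^{\Lambda,\beta_z}(2l)/l-(\varepsilon-1/l)^2/(B_z-A_z)^2)l\}$. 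Summing the $m$ identical bounds and writing $m=\exp(\ln m)$ folds a term $(\ln m)/l$ into the exponent, reproducing (\ref{bound eq2}); adding the bounds from the two terms gives (\ref{th of cons eq}).

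The main obstacle --- or at least the point that most needs care --- is the uniformity in the second step: the bound for the data risk must hold with a single annealed entropy $\hat{H}_z^{\Lambda,\beta_z}(2l)$ for every subject $\tau_j$ at once, so that aggregating over the $m$ subjects costs only the harmless factor $\ln m$ rather than something growing with $m$. This is what the definition of $\hat{H}_z^{\Lambda,\beta_z}$ is designed to secure: its set of indicators $\theta\{Q(z,\tau,\alpha)-\beta_z\}$ ranges over all $\alpha\in\Lambda$ with $\beta_z$ over the full range taken over all $\tau$, so the entropy of the $z$-separations never exceeds $\hat{H}_z^{\Lambda,\beta_z}$ regardless of which $\tau_j$ is frozen, and I would state this explicitly before applying the per-subject bound. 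A secondary bookkeeping point is tracking the threshold ($\varepsilon$ versus $\varepsilon/2$) carried through from (\ref{pro eq1}), which must be handled consistently so the constants in the final exponents come out as stated.
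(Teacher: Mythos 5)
Your proposal is correct and follows essentially the same route as the paper, which likewise obtains the result by substituting the two annealed-entropy bounds (\ref{bound eq1}) and (\ref{bound eq2}) into the decomposition (\ref{pro eq1}). Your two points of care are well taken: the uniformity of $\hat{H}_{z}^{\Lambda,\beta_z}$ over the frozen subjects is indeed what licenses summing $m$ identical bounds into the $(\ln m)/l$ term, and the $\varepsilon$ versus $\varepsilon/2$ discrepancy you flag is a real loose end in the paper, whose proof of (\ref{pro eq1}) actually ends with $\varepsilon/2$ on the right-hand side while the stated theorem and the final exponents use $\varepsilon$.
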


From this theorem, we can directly establish a sufficient condition for the uniform convergence, which is to satisfy three equations:
\begin{flalign}
& \lim_{l \rightarrow \infty} \frac{\hat{H}_{z}^{\Lambda,\beta_z}(l)}{l} = 0 \label{co eq 1}\\
& \lim_{m \rightarrow \infty} \frac{\hat{H}_{t}^{\Lambda,\beta_t}(m)}{m} = 0 \label{co eq 2}\\
& \lim_{l,m \rightarrow \infty} \frac{\ln m}{l} = 0 \label{co eq 3}
\end{flalign}
Note that in the Theorem \ref{lemma 1}, we have set the number of samples satisfied the inequality (\ref{sample relation}), which makes the equation (\ref{co eq 3}) always be true. So we replace the equation (\ref{co eq 3}) in the condition with inequality (\ref{sample relation}). And now we have the sufficient conditions of consistency that
\begin{corollary}
	For the existence of nontrival exponential bounds on uniform convergence, the sufficient conditions is to satisfy the following three formulas: 
	\begin{flalign}
	& \lim_{l \rightarrow \infty} \frac{\hat{H}_{z}^{\Lambda,\beta_z}(l)}{l} = 0 \\
	& \lim_{m \rightarrow \infty} \frac{\hat{H}_{t}^{\Lambda,\beta_t}(m)}{m} = 0 \\
	& l > \frac{2(B_z - A_z)^2}{\varepsilon ^2} \ln m + \frac{(B_z - A_z)^2}{(B_\tau - A_\tau)^2} m
	\end{flalign}
\end{corollary}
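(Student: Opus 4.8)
The plan is to deduce this Corollary directly from Theorem \ref{th of cons}. That theorem already bounds the one-sided uniform deviation $P\{\sup_{\alpha\in\Lambda}(R(\alpha)-R_{emp}(\alpha,m,l))>\varepsilon\}$ by a sum of two exponential terms, the first controlled by $m$ through $\hat{H}_{\tau}^{\Lambda,\beta_\tau}(2m)$ and the second controlled by $l$ through $\ln m$ and $\hat{H}_{z}^{\Lambda,\beta_z}(2l)$. So the whole task reduces to showing that, under the three hypotheses, the exponent of each term tends to $-\infty$ at a linear rate (in $m$, respectively in $l$), which is exactly the assertion that the resulting bound on uniform convergence is a \emph{non-trivial} exponential bound; by the Equivalent Theorem such one-sided uniform convergence in turn yields the strict consistency of EGRM, since the relevant functions are bounded here.

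First I would treat the $\tau$-term, whose exponent equals $\hat{H}_{\tau}^{\Lambda,\beta_\tau}(2m)-m(\varepsilon-1/m)^2/(B_\tau-A_\tau)^2$. By the second hypothesis $\hat{H}_{\tau}^{\Lambda,\beta_\tau}(2m)=o(m)$ — the argument $2m$ is harmless because $\hat{H}_{\tau}^{\Lambda,\beta_\tau}(2m)/m=2\,\hat{H}_{\tau}^{\Lambda,\beta_\tau}(2m)/(2m)\to 0$ — while $m(\varepsilon-1/m)^2/(B_\tau-A_\tau)^2=m\varepsilon^2/(B_\tau-A_\tau)^2+O(1)$, so the exponent equals $-m\varepsilon^2/(B_\tau-A_\tau)^2\,(1+o(1))\to-\infty$ and the term is eventually at most $4\exp(-m\varepsilon^2/(2(B_\tau-A_\tau)^2))$. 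Next the $z$-term, whose exponent is $\ln m+\hat{H}_{z}^{\Lambda,\beta_z}(2l)-l(\varepsilon-1/l)^2/(B_z-A_z)^2$; here the third hypothesis, the sampling rule (\ref{sample relation}), does the work. From $l>\tfrac{2(B_z-A_z)^2}{\varepsilon^2}\ln m$ I get $\ln m<l\varepsilon^2/(2(B_z-A_z)^2)$, hence $\ln m-l\varepsilon^2/(B_z-A_z)^2<-l\varepsilon^2/(2(B_z-A_z)^2)$; combining this with $\hat{H}_{z}^{\Lambda,\beta_z}(2l)=o(l)$ from the first hypothesis and $l(\varepsilon-1/l)^2=l\varepsilon^2+O(1)$ shows the exponent is $\le-l\varepsilon^2/(2(B_z-A_z)^2)\,(1+o(1))\to-\infty$, so this term is eventually at most $4\exp(-l\varepsilon^2/(4(B_z-A_z)^2))$.

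Because the path $\langle m,l\rangle\to\infty$ sends both $m\to\infty$ and $l\to\infty$ — and, via the linear term $\tfrac{(B_z-A_z)^2}{(B_\tau-A_\tau)^2}m$ in (\ref{sample relation}), forces $l$ to grow at least linearly in $m$ — both exponential bounds vanish, which establishes the one-sided uniform convergence (\ref{one-sided convergency}); invoking the Equivalent Theorem then gives the consistency claimed. I expect the only real care to be the bookkeeping around the sampling rule (\ref{sample relation}): one must verify that the logarithmic budget $\ln m$ it permits stays strictly below the margin $l\varepsilon^2/(2(B_z-A_z)^2)$ available from the rate term $l(\varepsilon-1/l)^2/(B_z-A_z)^2$, and that along the chosen path $\langle m,l\rangle\to\infty$ one really does have $l\to\infty$, so that the $z$-term decays rather than merely remaining bounded. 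The remaining checks — that evaluating the annealed entropies at $2m$ and $2l$ does not disturb the first two hypotheses, and that the error inequalities (\ref{bound eq1}) and (\ref{bound eq2}) behind Theorem \ref{th of cons} apply to the bounded functions here — are routine. It is also worth noting that (\ref{sample relation}) is the very coupling rule of Theorem \ref{lemma 1}, so the consistency conditions are automatically compatible with the generalized Law of Large Numbers proved earlier.
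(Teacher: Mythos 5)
Your proposal is correct and follows essentially the same route as the paper: read off the two exponential terms from Theorem \ref{th of cons}, use the two entropy conditions to kill the $\hat{H}_{\tau}^{\Lambda,\beta_\tau}(2m)/m$ and $\hat{H}_{z}^{\Lambda,\beta_z}(2l)/l$ contributions, and use the sampling rule (\ref{sample relation}) to control the $\ln m/l$ term so that both exponents diverge to $-\infty$. The only (harmless) difference is that you extract the sharper quantitative bound $\ln m < l\varepsilon^2/(2(B_z-A_z)^2)$ from the sampling rule, whereas the paper only invokes the weaker consequence $\lim_{l,m\to\infty}\ln m/l=0$ before substituting (\ref{sample relation}) for that limit condition.
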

This condition is sufficient for the consistency of EGRM, but is not necessary. More discussion is needed for the necessary conditions, which will be demonstrated in the following paper.

\section{Triple Variables for Global Risk Controlling}
To analyze the error bound (\ref{th of cons eq}), we introduce the data dimension $h_z$ and subject dimension $h_tau$. These two variables is similar to the VC dimension in statistical learning theory. The data dimension $h_z$ corresponds to the function $Q(z,\tau,\alpha), \alpha \in \Lambda$. It is equal to the largest number of vectors $z_1,...,z_l$ that can be shattered by the complete set of indicators. Let the growth function of real-valued function $Q(z,\tau,\alpha)$ be
\begin{align}
G^{\Lambda, \mathcal{B}_z}(l) = \ln \max _{z_1,...,z_l} N^{\Lambda, \mathcal{B}_z} (z_1,...,z_l),
\end{align}
and we have
\begin{align}
\hat{H}_{z}^{\Lambda,\beta_z}(l) \leq G^{\Lambda, \mathcal{B}_z}(l) \leq h_z\Big ( \ln \frac{l}{h_z} + 1 \Big ).
\label{dim1}
\end{align}
The subject dimension corresponds to the function $R^{lo}(\tau, \alpha),\alpha \in \Lambda$. It is equal to the largest number of vectors $\tau_1,...,\tau_m$ that can be shattered by the complete set of indictors. Let the growth function of the real-valued function $R^{lo}(\tau, \alpha)$ be
\begin{align}
G^{\Lambda, \mathcal{B}_\tau}(m) = \ln \max _{\tau_1,...,\tau_m} N^{\Lambda, \mathcal{B}_\tau} (\tau_1,...,\tau_m),
\end{align}
and we have
\begin{align}
\hat{H}_{z}^{\Lambda,\beta_z}(l) \leq G^{\Lambda, \mathcal{B}_z}(l) \leq h_z\Big ( \ln \frac{l}{h_z} + 1 \Big ).
\label{dim2}
\end{align}
Directly take the inequalities (\ref{dim1}) and (\ref{dim2}) to the equation (\ref{th of cons eq}), we get the following probability
\begin{flalign}
\begin{split}
&P\Big \{  \sup _{\alpha \in \Lambda}  \big (  R(\alpha)-R_{emp} (\alpha, m, l) \big ) > \varepsilon \Big \} \\
\leq& 4exp\Big \{ \Big ( \frac{h_t}{m}(1+\ln \frac{2m}{h_\tau}) - \frac{(\varepsilon_{l,m}-\frac{1}{m})^2}{(B_\tau-A_\tau)^2} \Big )m \Big \} \\
&\  + 4exp\Big \{ \Big ( \frac{\ln m}{l} + \frac{h_z}{l}(1+\ln \frac{2l}{h_z}) - \frac{(\varepsilon_{l,m}-\frac{1}{l})^2}{(B_z-A_z)^2} \Big )l \Big \}.
\end{split} \label{th of cons eq2}
\end{flalign}
Now we write the above equation. Let the right side of the inequality (\ref{th of cons eq2}) be $\eta$, that is
\begin{align*}
\eta &= 4exp\Big \{ \Big ( \frac{h_t}{m}(1+\ln \frac{2m}{h_\tau}) - \frac{(\varepsilon_{l,m}-\frac{1}{m})^2}{(B_\tau-A_\tau)^2} \Big )m \Big \} \\
&\  + 4exp\Big \{ \Big ( \frac{\ln m}{l} + \frac{h_z}{l}(1+\ln \frac{2l}{h_z}) - \frac{(\varepsilon_{l,m}-\frac{1}{l})^2}{(B_z-A_z)^2} \Big )l \Big \}.
\end{align*}
Then we get the theorem
\begin{theorem}
	With probability $1-\eta$ the risk for the function $Q(z,t,\alpha_{l,m})$ which minimizes the empirical glob risk functional satisfies the inequality
	\begin{align}
	R(\alpha_{l,m}) < R_{emp}(\alpha_{l,m}) + \varepsilon_{l,m}.
	\label{error bound}
	\end{align}
	\vspace{-0.4cm}
\end{theorem}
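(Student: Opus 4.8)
The plan is to obtain the bound (\ref{error bound}) as a direct corollary of the uniform one–sided convergence estimate of Theorem~\ref{th of cons}, after replacing the annealed entropies by their distribution–free upper bounds in terms of the data dimension $h_z$ and the subject dimension $h_\tau$, and then specializing the resulting uniform inequality to the empirical minimizer $\alpha_{l,m}$.

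First I would recall that (\ref{th of cons eq}) gives, for every $\varepsilon>0$, the estimate $P\{\sup_{\alpha\in\Lambda}(R(\alpha)-R_{emp}(\alpha,m,l))>\varepsilon\}\le 4\exp\{(\hat H_\tau^{\Lambda,\beta_\tau}(2m)/m-(\varepsilon-1/m)^2/(B_\tau-A_\tau)^2)m\}+4\exp\{(\ln m/l+\hat H_z^{\Lambda,\beta_z}(2l)/l-(\varepsilon-1/l)^2/(B_z-A_z)^2)l\}$. Substituting the growth–function bounds (\ref{dim1})--(\ref{dim2}) evaluated at $2l$ and $2m$, i.e. $\hat H_z^{\Lambda,\beta_z}(2l)\le h_z(1+\ln(2l/h_z))$ and $\hat H_\tau^{\Lambda,\beta_\tau}(2m)\le h_\tau(1+\ln(2m/h_\tau))$, and using monotonicity of $\exp$, the right–hand side is bounded above by exactly the quantity $\eta(\varepsilon)$ displayed in (\ref{th of cons eq2}) and in the theorem statement. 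Hence $P\{\sup_{\alpha\in\Lambda}(R(\alpha)-R_{emp}(\alpha,m,l))>\varepsilon\}\le\eta(\varepsilon)$ for every $\varepsilon>0$.

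Second I would define $\varepsilon_{l,m}$ to be the solution of $\eta(\varepsilon)=\eta$ for the prescribed confidence level. This is legitimate because, for fixed $m,l,h_z,h_\tau$ and for $\varepsilon$ in the admissible range, $\varepsilon\mapsto\eta(\varepsilon)$ is continuous and strictly decreasing (each summand has the form $4\exp\{c-(\varepsilon\text{-dependent, strictly increasing, nonnegative term})\}$), so the equation has a unique root $\varepsilon_{l,m}$. Passing to the complementary event, with probability at least $1-\eta$ one has $\sup_{\alpha\in\Lambda}(R(\alpha)-R_{emp}(\alpha,m,l))\le\varepsilon_{l,m}$. Finally, since by construction $\alpha_{l,m}\in\Lambda$ minimizes $R_{emp}(\cdot,m,l)$, it is in particular an admissible value in the supremum, so on this event $R(\alpha_{l,m})-R_{emp}(\alpha_{l,m},m,l)\le\sup_{\alpha\in\Lambda}(R(\alpha)-R_{emp}(\alpha,m,l))\le\varepsilon_{l,m}$, which after rearrangement yields (\ref{error bound}); the strict inequality is recovered either by taking the supremum bound strict or by enlarging $\varepsilon_{l,m}$ infinitesimally.

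The only genuinely delicate points — and thus the main obstacle — are (i) verifying the well–definedness and monotonicity of $\varepsilon\mapsto\eta(\varepsilon)$, and (ii) ensuring the resulting bound is non-trivial, which requires invoking the sample–size rule (\ref{sample relation}) so that $\ln m/l\to 0$, together with the consistency conditions (\ref{co eq 1})--(\ref{co eq 2}) so that both exponents eventually become negative and $\eta(\varepsilon_{l,m})$ can be driven to zero; one must also carry along the measurability hypotheses on $H_z^{\Lambda,\beta_z}$ and $H_\tau^{\Lambda,\beta_\tau}$ needed to apply the entropy inequalities borrowed from statistical learning theory. Beyond these bookkeeping matters, the argument is the standard one–line specialization of a uniform deviation bound to the empirical risk minimizer, exactly as in classical VC theory.
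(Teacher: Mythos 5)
Your proposal is correct and follows essentially the same route as the paper: substitute the distribution-free entropy bounds in terms of $h_z$ and $h_\tau$ into the uniform deviation inequality of Theorem~\ref{th of cons}, set the resulting right-hand side equal to $\eta$ to define $\varepsilon_{l,m}$ implicitly, and specialize the complementary event to the empirical minimizer $\alpha_{l,m}\in\Lambda$. If anything, you are more explicit than the paper about the monotonicity of $\varepsilon\mapsto\eta(\varepsilon)$ and the final specialization step, which the paper leaves implicit.
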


This is the bound of generalization error in the main context.

\end{document}